\theoremstyle{definition}
\newtheorem{definition}{Definition}
\theoremstyle{plain}
\newtheorem{theorem}{Theorem}
\newtheorem{lemma}{Lemma}
\title{ContinuouSP: Generative Model for Crystal Structure Prediction 
\\ with Invariance and Continuity}
\author{
    \textsuperscript{\rm 1} Yuji Tone,
    \textsuperscript{\rm 1} Masatoshi Hanai,
    \textsuperscript{\rm 1} Mitsuaki Kawamura,
    \textsuperscript{\rm 1} Kenjiro Taura,
    \textsuperscript{\rm 1} Toyotaro Suzumura
}
\begin{document}

\maketitle

\begin{abstract}

The discovery of new materials using crystal structure prediction (CSP) based on generative machine learning models has become a significant research topic in recent years. 
In this paper, we study invariance and continuity in the generative machine learning for CSP.
We propose a new model, called ContinuouSP, which effectively handles symmetry and periodicity in crystals.
We clearly formulate the invariance and the continuity, and construct a model based on the energy-based model. 
Our preliminary evaluation demonstrates the effectiveness of this model with the CSP task.
\end{abstract}

%

\section{Introduction}
Recently, there has been a growing trend toward incorporating computers into the discovery of new materials, a process that traditionally relied heavily on human intuition. 
Particularly for solid materials, recent successes using generative machine learning models provide a rational approach from the perspective of ``emulating human intuition''. 
However, theoretically, a solid material consists of an atomic spatial arrangement which is not necessarily finite, making it challenging to model correctly in a generative machine learning framework.

When discussing machine learning models for materials, \emph{invariance} and \emph{continuity} are critical aspects. 
Consider a function with materials as its domain; for this function to hold physical meaning, it must be invariant under operations such as ``10 cm translation along the x-axis'' or ``60° rotation around the y-axis''. 
Additionally, if some atoms in the material are displaced by a small amount, the resulting change should also be small, meaning that the continuity must be preserved. 
In generative models, two functions are considered: (i) the mapping from training data to the training outcome, and (ii) the probability density function determined by this outcome. 
A ``physically correct'' model can only be obtained when all these requirements are satisfied.

In this paper, we focus on crystal structure prediction (CSP), which predicts crystal structures from given atom species vector in one period. 
Recent CSP models have achieved the invariance in their probability density by employing equivariant score predictors that incorporate graph neural networks (GNNs) \cite{CDVAE,DiffCSP,SyMat,DiffCSP++}.
However, it is still difficult to achieve the continuity simultaneously.

Therefore, we develop a CSP model called \emph{ContinuouSP}, which simultaneously meets the invariance and the continuity by using an energy-based model (EBM). \footnote{Code: \url{https://github.com/packer-jp/ContinuouSP}}
We employ a modified version of the crystal graph convolutional neural networks (CGCNN) \cite{CGCNN} as the energy predictor. 
CGCNN is an invariant property prediction model that can also ensure continuity with straightforward modifications.

The main contributions of this paper are as follows:
\begin{itemize}
    \item The concepts of invariance and continuity in solid materials and periodic units of crystals are mathematically formulated.
    \item A CSP model, ContinuouSP, is designed within the EBM framework to satisfy the invariance and the continuity.
    \item Through performance evaluation, it is demonstrated that while ContinuouSP does not achieve state-of-the-art performance, it surpasses traditional machine learning approaches and performs comparably to several existing generative models.
    \item We clarify the advantages and challenges of ContinuouSP through supplementary experiments and theoretical analyses.
\end{itemize}

\section{Related Work}

\subsection{Crystal Property Prediction}
Modern machine learning schemes to predict material properties were first developed to accelerate molecular dynamics simulation~\cite{PhysRevLett.98.146401}.
For this purpose, the radial and angular distribution functions were convoluted with artificial neural networks inside each atom, and then the total energy was output.
The GNN is also employed to construct a universal model including various kinds of elements~\cite{CGCNN,SchNet,MEGNet,PotNet}.
This GNN model can handle periodicity and rotation, permutation, and translation invariance and be extended to the hypergraph to treat the bond angle directly~\cite{ALIGNN}.
Recently, a more accurate graph-transformer-based model~\cite{Matformer} and infinitely fully connected neural network for crystal systems~\cite{Crystalformer} have been proposed.
These models can be assumed as an extension of the GNN for crystals by using the attention mechanism.
This study uses these GNN models to describe the logarithmic probability because that function is expected to behave as the total energy, and we can easily modify that model to guarantee the continuity of the lattice deformation.

\subsection{CSP and Crystal Generation}
CSP and crystal generation using machine learning have traditionally developed under the predict-optimize paradigm \cite{Predict_Optimize}, based on formation energy predictors and optimization methods. On the other hand, in recent years, significant research efforts have been dedicated to exploring the potential of generative models. FTCP \cite{FTCP}, proposed alongside the implementation of a crystal generative model using VAE, serves as a reversible feature representation for crystals. Among crystal generative models utilizing GANs is CrystalGAN \cite{CrystalGAN}, which targets crystals with specific compositions and enables efficient crystal generation by partitioning the search space. Furthermore, inspired by the success of diffusion models in the field of computer vision, numerous studies have reported advances using diffusion models. One advantage of diffusion models lies in their ability to guarantee the invariance of probability density functions by adopting equivariant score predictors for translations, rotations, and permutations of atomic orders. CDVAE \cite{CDVAE}, a pioneering work in this approach, explicitly handles atomic coordinates. DiffCSP \cite{DiffCSP} employs fractional coordinates and Fourier-transformed features, while SyMat \cite{SyMat} utilizes interatomic distances. EquiCSP \cite{EquiCSP} explores the invariance concerning the permutation of lattice vector orders. DiffCSP++ \cite{DiffCSP++} further extends the DiffCSP method by incorporating space group considerations. However, we believe that, although these approaches partially ensure certain invariances, they still fall short of fully satisfying all the necessary properties that should be met.

\subsection{Continuity of Machine Learning Models}
In the field of computer vision, the continuity of machine learning models has been a subject of discussion. Specifically, it has been pointed out that using classical representations such as quaternions or Euler angles for 3D rotations in point clouds or joints results in a lack of the continuity \cite{Continuity_Rotation}. In general, for 3D rotations, representations with 4 or less dimensions are insufficient from the perspective of the continuity, and it has been shown that representations of 5 or higher dimensions should be used, along with practical representation example. Additionally, self-selecting ensembles \cite{Self_Selecting_Ensambles} provide an approach to address the topological complexity arising from the rotational symmetry of the target.

\subsection{Energy-Based Models}
Energy-based models (EBMs), which trace their origins back to the Boltzmann machine \cite{Boltzman_Machine}, continue to be actively explored across various domains. For example, \cite{EBM_OpenAI} demonstrates performance comparable to then state-of-the-art methods in tasks such as image generation and corrupted data restoration. Also, Generative PointNet \cite{Generative_PointNet} leverages EBMs for point cloud generation, achieving high performance while ensuring invariance. In the realm of physics applications, protein conformation prediction stands out as a prominent example \cite{EBM_Meta}. Moreover, EBMs form the theoretical foundation of diffusion models, and their indirect impact in this area is immeasurable.

\section{Preliminaries}

\subsection{Solid Materials}
A \emph{solid material} refers to an atomic arrangement in 3D space. Its mathematical definition is as follows:
\begin{definition}[Solid Materials]
Let $\mathbb{A}$ be the set of atomic species consisting of H, He, Li, and so on. The set of all solid materials, $\mathcal{S}$, is the set of all countable subsets of $\mathbb{A} \times \mathbb{R}^{3 \times 1}$. Namely, $(a, x) \in S$ ($\in \mathcal{S}$) represents a pair of an atomic species and a coordinate in the solid material $S$.
\end{definition}

Next, we define each kind of \emph{invariance} in solid materials. Physical properties such as the formation energy, the band gap, and the bulk modulus, which take scalar values, are examples of invariance under translation or rotation. Below, $\mathcal{Y}$ represents an arbitrary set, and elements of groups are identified with their standard representations.
\begin{definition}[Translation Invariance on Solid Materials]
A partial mapping $f: \mathcal{S} \rightharpoonup \mathcal{Y}$ is said to be translation invariant if, for any solid material $S \in \mathrm{Dom}(f)$ and any translation operation $b \in \mathrm{T}(3)$, the equation $f(\{(a, x + b) \mid (a, x) \in S\}) = f(S)$ holds.
\end{definition}
\begin{definition}[Rotation Invariance on Solid Materials]
A partial mapping $f: \mathcal{S} \rightharpoonup \mathcal{Y}$ is said to be rotation invariant if, for any solid material $S \in \mathrm{Dom}(f)$ and any rotation operation $Q \in \mathrm{O}(3)$, the equation $f(\{(a, Qx) \mid (a, x) \in S \}) = f(S)$ holds.
\end{definition}

Finally, we define the \emph{continuity} of solid materials. Continuity here means that when each atom is displaced by a small amount, the changes in the values corresponding to the solid material, such as physical properties, are also small.
\begin{definition}[Continuity on Solid Materials]
A partial mapping $f: \mathcal{S} \rightharpoonup \mathbb{R}^m$ is said to be continuous at a solid material $S \in \mathcal{S}$ if, for any $\varepsilon > 0$, there exists a $\delta > 0$ such that the following conditions hold for any solid material $S' \in \mathrm{Dom}(f)$. 
There exists a bijection $\phi: S \to S'$, $\phi(a, x) = (a', x')$, and if $a' = a$ and $\|x' - x\| < \delta$, then $\|f(S') - f(S)\| < \varepsilon$. In what follows, when we simply say that $f$ is continuous, it means it is continuous for all solid materials in its domain.
\end{definition}

\subsection {Crystals}
Let us formulate the \emph{periodic unit} commonly used to describe crystals. In general, a crystal can be represented by (i) a \emph{species vector} corresponding to the species of atoms in one period, (ii) a \emph{coordinate matrix} corresponding to the coordinates of atoms in one period, and (iii) a \emph{lattice basis} corresponding to the shape of the unit cell.

\begin{definition}[Periodic Units]
The set of all periodic units with $n$ atoms per period is denoted as $\mathcal{P}_n$. This consists of $(A, (x_1, \ldots, x_n), L) \in \mathbb{A}^n \times \mathbb{R}^{3 \times n} \times \mathbb{R}^{3 \times 3}$ such that for any $i, j \in \{1, \ldots, n\}$ and $k \in \mathbb{Z}^{3 \times 1}$ with $i \neq j$ or $k \neq 0$, $x_j + Lk \neq x_i$.
Namely, in a periodic unit $(A, X, L) \in \mathcal{P}_n$, $A$ represents the species vector, $X$ represents the coordinate matrix, and $L$ represents the lattice basis.
\end{definition}

A periodic unit can be converted into a solid material as follows:

\begin{definition}[Periodic Units to Solid Materials]
The conversion from a periodic unit to a solid material, $\mathrm{PtoS}: \bigcup_{n=1}^\infty \mathcal{P}_n \to \mathcal{S}$ is represented as follows:
\begin{multline}
\mathrm{PtoS}((a_1, \ldots, a_n), (x_1, \ldots, x_n), L) = \\ \left\{
(a_i, x_i + Lk)
\mid
i \in \{1, \ldots, n\}, k \in \mathbb{Z}^{3 \times 1}
\right\}.
\end{multline}
Through this conversion $\mathrm{PtoS}$, the entire set of crystals can be expressed as the image $\mathrm{PtoS}(\bigcup_{n=1}^\infty \mathcal{P}_n)$. Namely, a crystal is a solid material where atoms are periodically arranged according to lattice basis.
\end{definition}

It should be noted that $\mathrm{PtoS}$ is not injective. In other words, a periodic unit can be re-described without changing the crystal it indicates. The replacement of lattice basis, encompassing both cases where the number of atoms per period changes and does not change, as well as the rearrangement of atomic order, corresponds to a re-description. To address this, we introduce the concept of \emph{re-description invariance}. For example, a function that takes a periodic unit as input and calculates a physical property of the crystal it indicates should reproduce the same output even if the input periodic unit is re-described. Only when this is guaranteed, the function can be interpreted as a mapping from a crystal to its physical property.

\begin{definition}[Strong Re-description Invariance]
A mapping $g: \bigcup_{n=1}^\infty \mathcal{P}_n \to \mathcal{Y}$ is said to be strongly re-description invariant if, for any periodic units $P, P' \in \bigcup_{n=1}^\infty \allowbreak \mathcal{P}_n$, $\mathrm{PtoS}(P') = \mathrm{PtoS}(P)$ implies $g(P') = g(P)$. In this case, there exists a unique mapping $f: \mathrm{PtoS}(\bigcup_{n=1}^\infty \mathcal{P}_n) \to \mathcal{Y}$ such that $f \circ \mathrm{PtoS} = g$. When $g$ is strongly re-description invariant, the notation $g \circ \mathrm{PtoS}^{-1}$ is expediently allowed and is considered equal to $f$.
\end{definition}

Additionally, a version with a fixed number of atoms per period, which is a weaker form of the above definition, is also defined.

\begin{definition}[Weak Re-description Invariance]
A mapping $g: \bigcup_{n=1}^\infty \mathcal{P}_n \to \mathcal{Y}$ is said to be $n$-weakly re-description invariant if, for any periodic units with $n$ atoms $P, P' \in \mathcal{P}_n$, $\mathrm{PtoS}(P') = \mathrm{PtoS}(P)$ implies $g(P') = g(P)$. In this case, there exists a unique mapping $f: \mathrm{PtoS}(\mathcal{P}_n) \to \mathcal{Y}$ such that $f \circ \mathrm{PtoS}|_{\mathcal{P}_n} = g|_{\mathcal{P}_n}$. When $g$ is $n$-weakly re-description invariant, the notation $g \circ \mathrm{PtoS}|_{\mathcal{P}_n}^{-1}$ is expediently allowed and is considered equal to $f$.

\end{definition}
\subsection {Crystal Graph Convolutional Neural Networks}
The \emph{crystal graph convolutional neural networks (CGCNN)} \cite{CGCNN} is a machine learning model designed to predict physical properties of crystals. By converting periodic units into graphs, it achieves any kind of invariance.

In the process of graph construction, atoms within one periodic unit of the crystal are interpreted as nodes, and edges are drawn between nodes corresponding to atomic pairs within a cutoff distance. Each edge is associated with a feature determined by the interatomic distance. Notably, this can be multi-edges.

\begin{definition}[Graph Construction in CGCNN]
For a graph including $n$ nodes corresponding to a periodic unit $P = (A, (x_1, \ldots, x_n), L) \in \mathcal{P}_n$, 
the neighborhood $\mathrm{N}_P(i)$ of a node $i \in \{1, \ldots, n\}$ is defined as follows:
\begin{equation}
\mathrm{N}_P(i) 
= \left\{(j, k) \;\middle|\;  
\begin{aligned}
&j \in \{1, \ldots, n\}, \\
&k \in \mathbb{Z}^{3 \times 1}, \\
&\|x_j + Lk - x_i\| < D
\end{aligned}
\right\},
\end{equation}
where $D$ is the cutoff distance.
When $(j, k) \in \mathrm{N}_P(i)$, an edge is drawn between nodes $i$ and $j$, carrying a feature $e_{i, j, k} \in \mathbb{R}^{d_\mathrm{e} \times 1}$ determined by the distance 
$\|x_j + Lk - x_i\|$. The resulting graph is undirected.
\end{definition}

Next, the node features are updated through graph convolution.

\begin{definition}[Graph Convolution in CGCNN]
The feature $v_i \in \mathbb{R}^{d_\mathrm{v} \times 1}$ of a node $i$, initialized based on the atomic species, is updated as follows:
\begin{equation}
v_i \leftarrow v_i + \sum_{(j, k) \in \mathrm{N}_P(i)} \psi_\theta(v_i, v_j, e_{i, j, k}).
\end{equation}
Here, $\theta$ represents the learning parameters, and $\psi_\theta$ is a continuous mapping for each column vector of the input.
\end{definition}

The resulting node features are then aggregated using mean pooling, and the output is obtained via a multi-layer perceptron (MLP). This process constitutes the CGCNN.

\begin{definition}[CGCNN]
$\mathrm{CGCNN}_\theta: \bigcup_{n=1}^\infty \mathcal{P}_n \to \mathbb{R}$ is defined by the following steps:
\begin{enumerate}
 \item Convert the input crystal into a graph.
 \item Update node features through graph convolution.
 \item Aggregate the graph features using mean pooling.
 \item Pass the aggregated features through an MLP to obtain the output.
\end{enumerate}
\end{definition}

CGCNN defined as above satisfies all the desired invariances, although it should be noted that this does not satisfy the continuity as it stands.
\begin{theorem}[Invariance of CGCNN]
\label{thm:invariance_of_CGCNN}
$\mathrm{CGCNN}_\theta$ is strongly re-description invariant. Also, $\mathrm{CGCNN}_\theta \circ \mathrm{PtoS}^{-1}$ is invariant to translation and rotation.
\end{theorem}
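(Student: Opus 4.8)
The plan is to establish the two assertions in turn, observing that the second is almost formal once the first is available. For strong re-description invariance the strategy is to show that the features produced by the graph convolution are intrinsic to the crystal $S=\mathrm{PtoS}(P)$ rather than to the chosen periodic unit $P$, so that mean pooling extracts a quantity insensitive to the cell. Concretely, I would fix $S$ and build the infinite graph $G_S$ whose nodes are all atoms $(a,y)\in S$ and whose edges join pairs at distance less than the cutoff $D$; since a crystal is a discrete point set, every ball meets only finitely many atoms and the convolution update is well defined on $G_S$. Writing $V^{(t)}(y)$ for the feature at $y$ after $t$ layers of this infinite-graph convolution (initialized by species), the first key step is that $V^{(t)}$ is invariant under the full translation-symmetry lattice $\Gamma$ of $S$: for $s\in\Gamma$, translation by $s$ is a bijection of $S$ preserving species and all distances, so an induction on $t$ (base case the species initialization, inductive step matching neighborhoods termwise) yields $V^{(t)}(y+s)=V^{(t)}(y)$.

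The second step identifies the finite CGCNN computation with the infinite one. For $P=(A,(x_1,\dots,x_n),L)$, every atom of $S$ is uniquely $x_j+Lk$ with $j\in\{1,\dots,n\}$ and $k\in\mathbb{Z}^{3\times 1}$, and $L\mathbb{Z}^{3\times1}\subseteq\Gamma$. Hence $\mathrm{N}_P(i)$ enumerates exactly the $G_S$-neighbors of $x_i$, the edge feature $e_{i,j,k}$ depends only on $\|x_j+Lk-x_i\|$, and each neighbor feature may be replaced using $V^{(t)}(x_j+Lk)=V^{(t)}(x_j)$ from the $\Gamma$-invariance above. A parallel induction then shows that the node feature of $i$ in the CGCNN graph for $P$ equals $V^{(\mathrm{final})}(x_i)$.

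I expect the pooling step to be the main obstacle, because a re-description may change the number of atoms per period (a supercell), so the two cells need not even have the same size, and one cannot simply match nodes one-to-one. The plan here is a counting argument. Writing $d=[\Gamma:L\mathbb{Z}^{3\times1}]$, the representatives $x_1,\dots,x_n$ form a transversal of the free $L\mathbb{Z}^{3\times1}$-action on $S$, so each $\Gamma$-orbit of atoms (of which there are $m$, the primitive-cell count) contains exactly $d$ of the $x_i$, and $n=dm$. Since $V^{(\mathrm{final})}$ is constant, say equal to $V_c$, on the $c$-th $\Gamma$-orbit, the mean-pooled feature collapses: $\tfrac1n\sum_{i=1}^n V^{(\mathrm{final})}(x_i)=\tfrac{d}{dm}\sum_{c=1}^m V_c=\tfrac1m\sum_{c=1}^m V_c$. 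This per-orbit average depends only on $S$, so any $P'$ with $\mathrm{PtoS}(P')=\mathrm{PtoS}(P)$ feeds the MLP the same input and yields equal output, which is exactly strong re-description invariance and certifies that $f=\mathrm{CGCNN}_\theta\circ\mathrm{PtoS}^{-1}$ is well defined.

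Finally, for translation and rotation invariance of $f$, I would transport the periodic unit rather than recompute. If $S=\mathrm{PtoS}(P)$, then $S$ translated by $b$ equals $\mathrm{PtoS}(A,X',L)$ with $X'$ obtained by adding $b$ to each column of $X$, and the rotation $\{(a,Qy)\mid(a,y)\in S\}$ equals $\mathrm{PtoS}(A,QX,QL)$; both transported tuples remain valid periodic units because translations and orthogonal maps are injective and hence preserve the distinctness condition. Since all neighborhoods, edge features, and species initializations in CGCNN depend on coordinates only through the distances $\|x_j+Lk-x_i\|$, and these are preserved by translation and by $Q\in\mathrm{O}(3)$, the whole computation is unchanged. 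Therefore $f$ applied to the translate equals $\mathrm{CGCNN}_\theta(P)=f(S)$, and likewise for the rotation, completing the proof.
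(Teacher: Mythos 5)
Your proposal is correct and takes essentially the same route as the paper: the paper also defines an infinite graph convolution directly on the solid material $S$, shows by induction that the finite CGCNN features coincide with the lattice-periodic infinite-graph features ($u_{a_i,\,x_i+Lk}=v_i$), and proves translation and rotation invariance by transporting the periodic unit and noting that the whole computation depends on coordinates only through the distances $\|x_j+Lk-x_i\|$. Your explicit orbit-counting argument ($n=dm$, so mean pooling collapses to the per-orbit average) fills in the supercell pooling step that the paper's proof leaves implicit.
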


\subsection {Energy-Based Models}

The \emph{energy-based model (EBM)} is one of the generative models designed to learn the underlying probability distribution of a given dataset. Unlike other generative models such as VAEs or GANs, the NN in an EBM does not directly output samples. Instead, it outputs a scalar value called \emph{energy} associated with the input, which is tied to the probability density.

\begin{definition}[Probability Density Function in EBMs]
The goal is to learn a probability distribution over a set $\mathcal{X}$. For a point $x \in \mathcal{X}$, the parameterized probability density is denoted as $p_\theta(x)$. A relationship is established between $p_\theta(x)$ and an NN $H_\theta: \mathcal{X} \to \mathbb{R}$ as follows:
\begin{gather}
p_\theta(x) = \frac{\exp(-\beta H_\theta(x))}{Z(\theta, \beta)}, \\
\text{where } Z(\theta, \beta) = \int_{x \in \mathcal{X}} \exp(-\beta H_\theta(x)) \mathrm{d}x.
\end{gather}
Here, $\beta > 0$ is the inverse temperature. By analogy with statistical mechanics, $H_\theta$ is referred to as the \emph{energy function}.
\end{definition}

A sampling method using the energy function is Markov-chain Monte Carlo (MCMC). In MCMC, a point is initially chosen at random and then iteratively updated. A fundamental MCMC algorithm is the Metropolis-Hastings (MH) algorithm, which proposes transitions to neighboring states and decides whether to accept these transitions based on changes in energy.

\begin{definition}[MH]
In MH, by denoting the transition probability from a point $x \in \mathcal{X}$ to another point $x' \in \mathcal{X}$ as $r(x' \mid x)$, the acceptance probability is defined as follows:
\begin{equation}
\min\left\{1, \frac{r(x \mid x') p_\theta(x')}{r(x' \mid x) p_\theta(x)}\right\}.
\end{equation}
\end{definition}

This ensures that the detailed balance condition is satisfied, and after sufficient iterations, the samples follow the probability distribution $p_\theta$.

If $H_\theta$ is differentiable with respect to the input, another MCMC method, Langevin Monte Carlo (LMC) algorithm, can be applied.

\begin{definition}[LMC]
In LMC, a point $x \in \mathcal{X} = \mathbb{R}^m$ is updated as follows:
\begin{equation}
x \leftarrow x - \alpha \beta \nabla_x H_\theta(x) + \sqrt{2\alpha} u,
\text{where } u \sim \mathcal{N}(0, I)
\end{equation}
If the step size $\alpha$ is sufficiently small, no acceptance decision like in MH is required. An LMC variant that includes acceptance decisions is called the Metropolis-adjusted Langevin algorithm (MALA).
\end{definition}

The EBM is typically trained via maximum likelihood estimation.

\begin{definition}[Loss Function in EBMs]
In an EBM, given a dataset $(x_1, \ldots, x_n) \in \mathcal{X}^n$, learning minimizes the following loss $J(\theta)$:
\begin{equation}
J(\theta) = -\sum_{i=1}^n \log p_\theta(x_i).
\end{equation}
\end{definition}

Although directly computing the loss function is difficult, its gradient can be calculated.

\begin{theorem}[Gradient of Loss Function in EBMs]
\label{thm:gradient_of_loss_function_in_EBMs}
The gradient of the loss $J(\theta)$ is given by:
\begin{equation}
\nabla_\theta J(\theta) 
= \frac{\beta}{n} \sum_{i=1}^n \left( \nabla_\theta H_\theta(x_i) - \mathbb{E}_{p_\theta(x)} \left[ \nabla_\theta H_\theta(x) \right] \right).
\end{equation}
Here, the subscript $p_\theta(x)$ of the expectation operator $\mathbb{E}$ represents sampling $x$ from the probability distribution corresponding to $p_\theta$.
\end{theorem}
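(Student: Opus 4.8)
The plan is to substitute the explicit EBM density into the loss, split off the tractable energy term from the intractable partition-function term, and reduce the entire problem to computing $\nabla_\theta \log Z(\theta, \beta)$, which will turn out to be an expectation under $p_\theta$ itself. The structure of the argument is the standard log-derivative (``log-partition'') computation.

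First I would take the logarithm of the density, obtaining $\log p_\theta(x_i) = -\beta H_\theta(x_i) - \log Z(\theta, \beta)$ directly from the definition of $p_\theta$. Summing over the dataset and negating, the loss decomposes into a data-dependent sum of energies plus a multiple of $\log Z(\theta, \beta)$. Differentiating the energy part with respect to $\theta$ is immediate and produces the $\nabla_\theta H_\theta(x_i)$ terms, so the whole content of the theorem lies in the gradient of the log-partition function.

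To evaluate that gradient I would write $\nabla_\theta \log Z = Z^{-1}\, \nabla_\theta Z$ and interchange the gradient with the integral defining $Z$. Differentiating the integrand $\exp(-\beta H_\theta(x))$ yields a factor $-\beta\, \nabla_\theta H_\theta(x)$, so that
\begin{equation}
\nabla_\theta \log Z(\theta, \beta) = -\beta \int_{x \in \mathcal{X}} \frac{\exp(-\beta H_\theta(x))}{Z(\theta, \beta)}\, \nabla_\theta H_\theta(x)\, \mathrm{d}x = -\beta\, \mathbb{E}_{p_\theta(x)}\!\left[\nabla_\theta H_\theta(x)\right],
\end{equation}
where the ratio inside the integral is recognized as $p_\theta(x)$. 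Substituting this back into the decomposed gradient and collecting terms, the data energies and the expectation combine into the claimed difference, with the leading normalization determined by the definition of $J(\theta)$.

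The main obstacle is the interchange of differentiation and integration used to obtain $\nabla_\theta Z$. Justifying it rigorously requires a dominated-convergence-type argument: one needs $H_\theta$ and $\nabla_\theta H_\theta$ to be sufficiently regular in $\theta$, and the integrand $\exp(-\beta H_\theta(x))\, \nabla_\theta H_\theta(x)$ to admit a $\theta$-uniform integrable bound on a neighborhood of the point of differentiation, so that the Leibniz rule applies. For a practical energy network $H_\theta$ these integrability conditions are typically assumed rather than verified, so I would state the needed hypothesis explicitly at the outset and then treat the remaining algebraic manipulations as routine.
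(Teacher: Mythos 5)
Your proposal is correct and follows essentially the same route as the paper's own proof: decompose $-\log p_\theta(x_i)$ into the energy term plus $\log Z(\theta,\beta)$, then evaluate $\nabla_\theta \log Z = Z^{-1}\nabla_\theta Z$ by differentiating under the integral to recognize $-\beta\,\mathbb{E}_{p_\theta(x)}[\nabla_\theta H_\theta(x)]$. Your explicit remark that the Leibniz-rule interchange needs a regularity hypothesis is a point the paper silently assumes, but it does not change the argument.
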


In essence, learning progresses to minimize the energy for data points while maximizing the energy for sample points.

\section{Proposed Method}

\begin{figure*}[tb]
\centering
\includegraphics[width=0.95\textwidth]{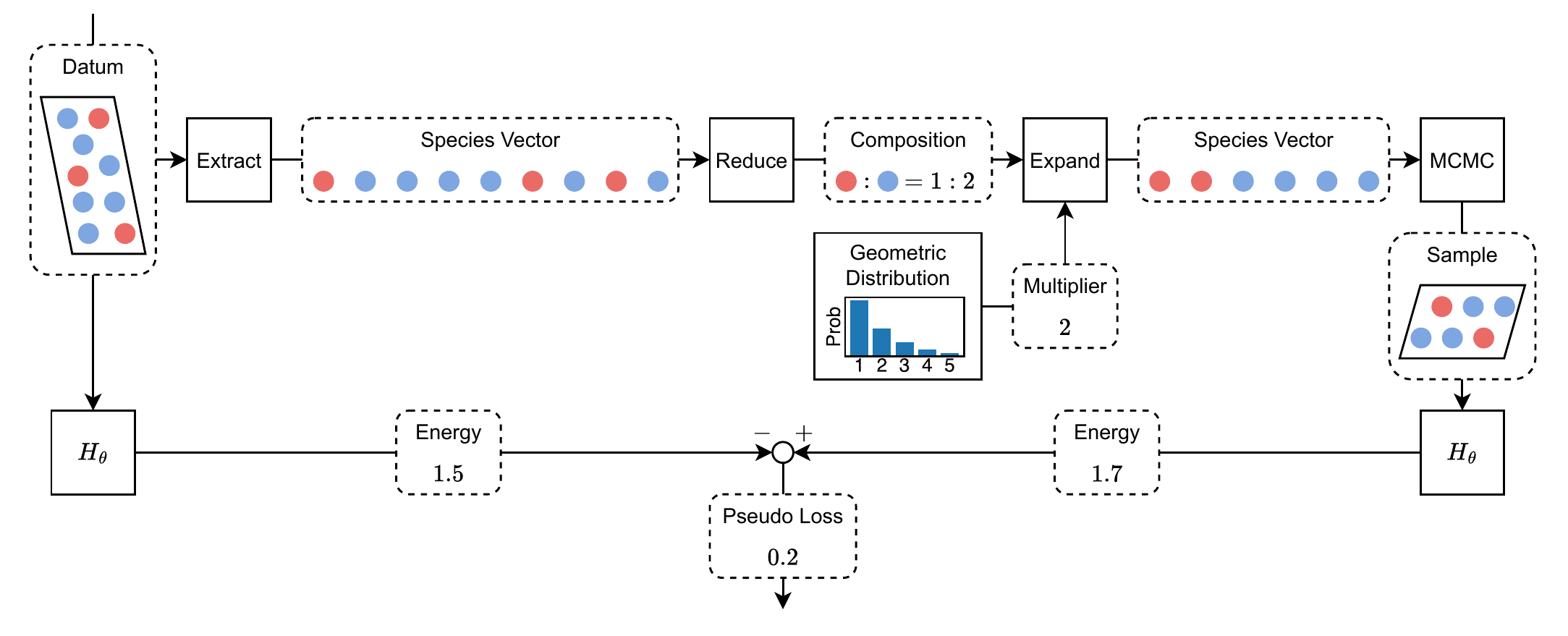} 
\caption{Diagram illustrating the training workflow of ContinuouSP: For each periodic unit of the crystal included in the training data points, the species vector is $\mathrm{Reduce}$-d to the composition, then $\mathrm{Expand}$-ed using a geometric distribution to create a new species vector. This vector is used to condition the sampling performed via MCMC. The resulting new crystal and the original crystal are compared by calculating the energy difference, which is used as a pseudo-loss.}
\label{fig:training}
\end{figure*}

We propose a CSP model, \emph{ContinuouSP}, that is both invariant and continuous. By leveraging the EBM framework, we can retain the invariance properties guaranteed by CGCNN.

First, we modify CGCNN slightly to ensure the continuity.

\begin{definition}[Graph Convolution in CGCNN']
$\mathrm{CGCNN}_\theta'$ is defined as a modified version of the update formula of $\mathrm{CGCNN}_\theta$, as follows:
\begin{multline}
v_i \leftarrow v_i + \\ \sum_{(j, k) \in \mathrm{N}_P(i)} \cos^2\Bigg(\frac{\pi\|x_j + Lk - x_i\|}{2D}\Bigg) \psi_\theta(v_i, v_j, e_{i, j, k}).
\end{multline}
Here, $\psi_\theta$ is a continuous mapping for each column vector of its inputs.
\end{definition}

Next, we use this directly as the energy function.

\begin{definition}[Energy Function in ContinuouSP]
The energy function $H_\theta: \bigcup_{n=1}^\infty \mathcal{P}_n \to \mathbb{R}$ is defined as $H_\theta = \mathrm{CGCNN}_\theta'$
\end{definition}

For the CSP task, which is a conditional generation problem based on the species vector, we define the relationship between the energy and the probability density.

\begin{definition}[Probability Density Function in ContinuouSP]
For a periodic unit $(A, X, L) \in \mathcal{P}_n$, the conditional probability density $p_\theta(X, L \mid A)$ is expressed using the energy function $H_\theta$ as:
\begin{gather}
\label{eq:probability_distribution_function_in_ContinuouSP}
p_\theta(X, L \mid A) = \frac{\exp(-\beta H_\theta(A, X, L))}{Z(\theta, \beta, A)}, \\
\begin{multlined}
\label{eq:partition_function_in_ContinuouSP}
\text{where } Z(\theta, \beta, A) = \\ \int_{\substack{X \in \mathbb{R}^{3 \times n} \\ L \in \mathbb{R}^{3 \times 3}}} \exp(-\beta H_\theta(A, X, L)) \mathrm{d}X \mathrm{d}L. 
\end{multlined}
\end{gather}
\end{definition}

The energy function $H_\theta$ satisfies the following properties:

\begin{theorem}[Properties of Energy Function in ContinuouS\-P]
\label{thm:properties_of_energy_function_in_ContinuouSP}
$H_\theta$ is strongly re-description invariant. Also, $H_\theta \circ \mathrm{PtoS}^{-1}$ is invariant to translation and rotation, and is continuous.
\end{theorem}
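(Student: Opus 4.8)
The plan is to separate the two invariance claims, which I expect to reduce almost immediately to Theorem~\ref{thm:invariance_of_CGCNN}, from the continuity claim, which carries the real content. For the invariances, I would observe that $\mathrm{CGCNN}_\theta'$ differs from $\mathrm{CGCNN}_\theta$ only by inserting the scalar weight $\cos^2(\pi\|x_j+Lk-x_i\|/(2D))$ into each message, and that this weight is a function of the interatomic distance $\|x_j+Lk-x_i\|$ alone. The proof of Theorem~\ref{thm:invariance_of_CGCNN} rests on the fact that the constructed graph---its neighborhoods, edge multiplicities, and edge distances---is an invariant of the underlying crystal, unchanged under re-description, under translation (the differences $x_j+Lk-x_i$ absorb the shift), and under rotation (orthogonal maps preserve norms). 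Since the new weight depends only on these already-invariant distances, the identical argument applies verbatim, giving strong re-description invariance of $H_\theta$ and translation/rotation invariance of $H_\theta\circ\mathrm{PtoS}^{-1}$. I would present this as a one-line reduction rather than reproving Theorem~\ref{thm:invariance_of_CGCNN}.

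The core is continuity, where the plan is to isolate the unique source of discontinuity in the original model and show that the $\cos^2$ factor removes it. Fix a crystal with description $P=(A,X,L)$. Computing $H_\theta$ amounts to building each finite neighborhood $\mathrm{N}_P(i)$, forming messages, summing, mean-pooling over the $n$ cell atoms, and applying the MLP. Every ingredient---the distances $\|x_j+Lk-x_i\|$, the edge features (taken to depend continuously on distance), $\psi_\theta$, the pooling, and the MLP---is continuous in $(X,L)$; the only non-continuous object is the index set $\mathrm{N}_P(i)$, which jumps when a pair crosses the cutoff $D$. The key step is to rewrite the update as a sum over all $(j,k)$ with weight $\tilde w(d)=\cos^2(\pi d/(2D))$ for $d<D$ and $\tilde w(d)=0$ for $d\ge D$. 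Because $\cos^2(\pi D/(2D))=\cos^2(\pi/2)=0$, the function $\tilde w$ is continuous on $[0,\infty)$, so each term is continuous in $(X,L)$ and, crucially, a term vanishes exactly as its pair approaches the cutoff. Locally only finitely many terms are nonzero and distant pairs remain distant under small perturbations, so the update is a finite sum of continuous functions and is therefore continuous across cutoff crossings. Composing the layers, the pooling, and the MLP yields continuity of $H_\theta$ in $(X,L)$; invoking the re-description invariance from the first part then transfers this to $f=H_\theta\circ\mathrm{PtoS}^{-1}$.

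The main obstacle is reconciling this finite, description-level argument with the continuity definition for solid materials, which is phrased via a species-preserving bijection $\phi$ of the infinite point sets with uniformly small displacement. I would need to show that, for a crystal $S'$ admitting such a $\delta$-bijection to $S=\mathrm{PtoS}(P)$, one can choose a description $P'=(A,X',L')$ with $\|X'-X\|$ and $\|L'-L\|$ controlled by $\delta$, so the continuity of $H_\theta$ in $(X,L)$ applies. This is delicate precisely because of the infinite lattice: the natural correspondence $x_i+Lk\mapsto x_i'+L'k$ has displacement $\|(L'-L)k\|$ growing with $\|k\|$, so uniform smallness constrains the admissible lattice change. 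I would handle this by taking $\delta$ below half the minimum interatomic distance of $S$, forcing $\phi$ to match each cell atom with a unique nearby atom of $S'$, so that the approximate periods of $S'$ pin down a description $P'$ close to $P$; the finite-neighborhood continuity above then closes the argument. I expect this lattice-stability and bijection-to-description step, rather than the soft-cutoff computation, to require the most care.
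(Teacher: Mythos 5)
Your proposal is correct and takes essentially the same approach as the paper: the invariance claims are reduced to Theorem~\ref{thm:invariance_of_CGCNN}, the fixed-description continuity is proved exactly as in the paper's Lemma~\ref{lem:continuity_of_CGCNN'} (rewriting each convolution as a sum over all pairs $(j,k)$ with the weight $\cos^2\bigl(\pi\|x_j+Lk-x_i\|/(2D)\bigr)$ extended by zero past the cutoff, continuous precisely because $\cos^2(\pi/2)=0$), and the passage from description-level continuity to continuity on solid materials is the content of the paper's Lemma~\ref{lem:continuity_on_periodic_units_to_solid_materials}. The bijection-to-description step you single out as delicate is exactly where Lemma~\ref{lem:continuity_on_periodic_units_to_solid_materials} does its work, and the paper treats it no more rigorously than you propose to (it asserts outright that a $\delta$-bijection yields a description $P'=(A,X',L')$ of $C'$ with the same atom count and reads off $\|x_i'-x_i\|<\delta$ and $\|l_j'-l_j\|<\delta$ by specializing lattice indices), so your minimum-interatomic-spacing argument is, if anything, more explicit than the paper's own.
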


\begin{theorem}[Properties of Probability Density Function in ContinuouSP]
\label{thm:properties_of_probability_density_function_in_ContinuouSP}
For any positive integer $n$, $p_\theta$ is $n$-weakly re-description invariant. Also, $p_\theta \circ \mathrm{PtoS}|_{\mathcal{P}_n}^{-1}$ is invariant to translation and rotation, and is continuous.
\end{theorem}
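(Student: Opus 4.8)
The plan is to exploit the factorization $p_\theta(X,L\mid A) = \exp(-\beta H_\theta(A,X,L))/Z(\theta,\beta,A)$: the numerator's behaviour is governed entirely by Theorem~\ref{thm:properties_of_energy_function_in_ContinuouSP}, so the only genuinely new object is the partition function $Z(\theta,\beta,A)$. The crucial structural remark is that $Z$ depends on the conditioning vector $A$ but not on $(X,L)$, and that every operation in the statement — re-description inside $\mathcal{P}_n$, translation, rotation, and small species-preserving displacement — alters $A$ at most by a permutation of its entries. Hence I would first isolate the single fact I need about $Z$: that $Z(\theta,\beta,A') = Z(\theta,\beta,A)$ whenever $A'$ is a permutation of $A$. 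Assuming $Z$ is finite and positive (so that $p_\theta$ is a well-defined density), this follows from a measure-preserving change of variables: writing $A'_j = a_{\pi(j)}$ and substituting $\tilde x_j = x_{\pi^{-1}(j)}$, the periodic units $(A',X,L)$ and $(A,\tilde X,L)$ have the same image under $\mathrm{PtoS}$, so strong re-description invariance of $H_\theta$ (Theorem~\ref{thm:properties_of_energy_function_in_ContinuouSP}) equates the integrands, while the column permutation has Jacobian of absolute value one.

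For $n$-weak re-description invariance, take $P=(A,X,L)$ and $P'=(A',X',L')$ in $\mathcal{P}_n$ with $\mathrm{PtoS}(P')=\mathrm{PtoS}(P)$. The numerators agree because $H_\theta$ is strongly re-description invariant. For the denominators I would argue that $A'$ must be a permutation of $A$: since both units describe the same crystal with the same number $n$ of atoms per period, they have equal covolume, and since the per-species number density is an intrinsic property of the crystal, the per-period species multisets coincide. The $Z$-identity from the first step then gives $Z(\theta,\beta,A')=Z(\theta,\beta,A)$, and therefore $p_\theta(X',L'\mid A')=p_\theta(X,L\mid A)$. This makes $f:=p_\theta\circ\mathrm{PtoS}|_{\mathcal{P}_n}^{-1}$ well defined.

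For the invariances of $f$, I would represent the transformed crystals by explicit periodic units that leave $A$ untouched: a periodic unit of the $b$-translate of $S$ is $(A,(x_1+b,\dots,x_n+b),L)$, and of the $Q$-rotate is $(A,(Qx_1,\dots,Qx_n),QL)$, each with the same species vector $A$ and hence the same $Z(\theta,\beta,A)$. The numerators are invariant because $H_\theta\circ\mathrm{PtoS}^{-1}$ is translation and rotation invariant (Theorem~\ref{thm:properties_of_energy_function_in_ContinuouSP}). Dividing by the common constant $Z$ yields $f(S_b)=f(S)$ and $f(S_Q)=f(S)$.

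For continuity I would write $f = \Phi\circ(H_\theta\circ\mathrm{PtoS}^{-1})$ with $\Phi(t)=\exp(-\beta t)/Z(\theta,\beta,A)$. The point is that for any $S'$ admitting a species-preserving bijection to $S$ with all displacements below $\delta$, preservation of per-species densities (together with both crystals lying in $\mathcal{P}_n$) forces $A'$ to be a permutation of $A$, so $Z$ is the same positive constant and $\Phi$ is one fixed continuous function; continuity of $f$ at $S$ then follows by composing the continuity of $H_\theta\circ\mathrm{PtoS}^{-1}$ (Theorem~\ref{thm:properties_of_energy_function_in_ContinuouSP}) with that of $\Phi$ and choosing the thresholds in the usual nested order. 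I expect the main obstacle to lie entirely in the partition function: both the change-of-variables identity for $Z$ and the claim that the relevant operations change $A$ only by a permutation must be justified carefully, and one must separately assume, or verify, that the integral defining $Z$ converges to a positive value, since an unbounded-below energy would make $p_\theta$ — and every statement about it — ill-posed.
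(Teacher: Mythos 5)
Your proposal is correct and follows essentially the same route as the paper's own proof: the numerator is handled by the strong re-description invariance of $H_\theta$, the denominator by showing $Z(\theta,\beta,A)$ is unchanged under a simultaneous permutation of $A$ and the integration variable $X$ (a measure-preserving substitution), and the translation/rotation invariance and continuity of $p_\theta \circ \mathrm{PtoS}|_{\mathcal{P}_n}^{-1}$ are inherited directly from Theorem~\ref{thm:properties_of_energy_function_in_ContinuouSP}. You are in fact more explicit than the paper on two points it leaves implicit --- that any re-description within $\mathcal{P}_n$ can only permute the species vector (via equal covolume and intrinsic per-species densities), and that finiteness and positivity of $Z$ must be assumed for $p_\theta$ to be well-posed --- but these are refinements of, not departures from, the paper's argument.
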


In the standard EBM, the parameters $\theta$ are determined via maximum likelihood estimation over the dataset. However, in the CSP task, the generation condition, species vector, is not inherently unique to the crystal (even ignoring permutation symmetry). Thus, directly using it in the loss function is inappropriate. In ContinuouSP, the loss function is defined prescriptively as follows:

\begin{definition}[Loss Function in ContinuouSP]
The loss function $J$ in ContinuouSP on the dataset $((A_1, X_1, L_1), \allowbreak \ldots, (A_m, X_m, L_m)) \in \left(\bigcup_{n=1}^\infty \mathcal{P}_n\right)^m$ is defined as:
\begin{multline}
J(\theta) = \frac{1}{m} \sum_{i=1}^m \Bigg(\beta H_\theta(A_i, X_i, L_i) + \\ \sum_{j=1}^\infty (1-q)^{j-1} q \log Z(\theta, \beta, A_i')\Bigg),
\end{multline}
where: 
\begin{itemize}
    \item $(1-q)^{j-1}q$ is the probability mass of geometric distribution with success probability $q$ and $j$ trials.
    \item $A_i'=\mathrm{Expand}(\mathrm{Reduce}(A_i), j)$.
    \item $\mathrm{Reduce}$ is a function that constructs the simplest integer ratio of atomic counts (so-called composition) from the species vector.
    \item $\mathrm{Expand}$ is a function that constructs the species vector by multiplying the composition by an integer.
\end{itemize}
Due to the re-description invariance of $H_\theta$, the computation outcome does not depend on the order of atoms in $A_i'$. Therefore, this loss function is well-defined.
\end{definition}

Then, the gradient of the loss function can be expressed in a form that corresponds well to that of a general EBMs.
\begin{theorem}[Gradient of Loss Function in ContinuouSP]
\label{thm:gradient_of_loss_function_in_ContinuouSP}
The gradient of the loss $J(\theta)$ is given by:
\begin{multline}
\nabla_\theta J(\theta) = \frac{\beta}{m} \sum_{i=1}^m \Bigg( \nabla_\theta H_\theta(A_i, X_i, L_i) - \\ \mathbb{E}_{\substack{j \sim \mathrm{Geom}(q) \\ p_\theta(X, L \mid A_i')}}\left[ \nabla_\theta H_\theta(A_i', X, L) \right] \Bigg), 
\end{multline}
where $\mathrm{Geom}(q)$ is geometric distribution with success probability $q$.
\end{theorem}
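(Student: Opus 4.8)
The plan is to differentiate $J(\theta)$ termwise and reduce everything to the single-condition identity
\[
\nabla_\theta \log Z(\theta, \beta, A) = -\beta \, \mathbb{E}_{p_\theta(X, L \mid A)}\bigl[\nabla_\theta H_\theta(A, X, L)\bigr],
\]
which is the conditional analogue of the computation underlying Theorem~\ref{thm:gradient_of_loss_function_in_EBMs}. First I would split $J$ into the data-energy part $\frac{1}{m}\sum_{i=1}^m \beta H_\theta(A_i, X_i, L_i)$, whose gradient is immediately $\frac{\beta}{m}\sum_{i=1}^m \nabla_\theta H_\theta(A_i, X_i, L_i)$, and the log-partition part $\frac{1}{m}\sum_{i=1}^m \sum_{j=1}^\infty (1-q)^{j-1} q \log Z(\theta, \beta, A_i')$.

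For the log-partition part, I would establish the single-condition identity by writing $\nabla_\theta \log Z = (\nabla_\theta Z)/Z$, differentiating under the integral sign in Equation~\eqref{eq:partition_function_in_ContinuouSP}, and using $\nabla_\theta \exp(-\beta H_\theta) = -\beta (\nabla_\theta H_\theta) \exp(-\beta H_\theta)$; dividing by $Z$ then recovers the density $p_\theta(X, L \mid A)$ of Equation~\eqref{eq:probability_distribution_function_in_ContinuouSP} inside the integral, yielding $-\beta$ times the stated expectation. Substituting this back and factoring out $-\beta$, the $j$-sum becomes $\sum_{j=1}^\infty (1-q)^{j-1} q \, \mathbb{E}_{p_\theta(X, L \mid A_i')}\bigl[\nabla_\theta H_\theta(A_i', X, L)\bigr]$. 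The final step is to recognize the weights $(1-q)^{j-1}q$ as the probability mass function of $\mathrm{Geom}(q)$, so that this $j$-sum is exactly an outer expectation over $j \sim \mathrm{Geom}(q)$; nesting it with the inner expectation over $p_\theta(X, L \mid A_i')$ produces the claimed double-subscripted expectation, and collecting the $\frac{\beta}{m}\sum_{i=1}^m$ prefactor completes the identity.

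The main obstacle is analytic rather than algebraic: justifying the interchanges of limit operations. Moving $\nabla_\theta$ inside the integral defining $Z$ requires a dominated-convergence (Leibniz-rule) argument, hence an integrable dominating function for $(\nabla_\theta H_\theta)\exp(-\beta H_\theta)$; this is plausible because the $\cos^2$ cutoff in $\mathrm{CGCNN}_\theta'$ keeps the energy and its parameter-gradient controlled, but it is exactly where integrability of $Z$, i.e.\ normalizability of $p_\theta$, must be assumed or verified. I would also need to exchange $\nabla_\theta$ with the infinite $j$-sum; since the geometric weights are summable and bounded by $1$, this reduces to the same uniform control on the per-term gradients. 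Modulo these regularity assumptions---the same ones implicit in Theorem~\ref{thm:gradient_of_loss_function_in_EBMs}---the remainder is the routine term-by-term calculation sketched above.
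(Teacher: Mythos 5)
Your proposal is correct and follows essentially the same route as the paper's proof: split off the data-energy term, apply the log-derivative identity $\nabla_\theta \log Z = (\nabla_\theta Z)/Z$ with differentiation under the integral to obtain $-\beta\,\mathbb{E}_{p_\theta(X,L\mid A_i')}[\nabla_\theta H_\theta]$, and absorb the geometric weights $(1-q)^{j-1}q$ into an outer expectation over $j \sim \mathrm{Geom}(q)$. Your added remarks on dominated convergence and exchanging $\nabla_\theta$ with the infinite $j$-sum are regularity points the paper leaves implicit, but they do not change the argument.
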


Thus, the pseudo-loss is computed through the flow illustrated in Fig. \ref{fig:training}, guiding the training process.

\begin{definition}[Loss Function for Single Datum]
For the mapping \(\mathcal{L}: \bigcup_{n=1}^\infty \mathcal{P}_n \to \mathbb{R}\) from a single datum to the loss, the following holds:
\begin{multline}
\mathcal{L}(A, X, L) = \frac{1}{m}\Bigg(\beta H_\theta(A, X, L) + \\ \sum_{j=1}^\infty (1-q)^{j-1} q \log Z(\theta, \beta, A')\Bigg) + R(\theta),
\end{multline}
where $R(\theta)$ accounts for the contributions of other data points.
\end{definition}

The following properties hold for $\mathcal{L}$:

\begin{theorem}[Properties of Loss Function in ContinuouSP]
\label{thm:properties_of_loss_function_in_ContinuouSP}
The mapping $\mathcal{L}$ is strongly re-description invariant. Also, the mapping $\mathcal{L} \circ \mathrm{PtoS}^{-1}$ is invariant to translation and rotation, and is continuous.
\end{theorem}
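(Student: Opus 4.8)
The plan is to exploit the additive structure of $\mathcal{L}$. Writing
\[
\mathcal{L}(A, X, L) = \frac{\beta}{m}\, H_\theta(A, X, L) + \frac{1}{m}\sum_{j=1}^\infty (1-q)^{j-1} q \log Z(\theta, \beta, A') + R(\theta),
\]
I would observe that each of strong re-description invariance, translation/rotation invariance, and continuity is preserved under scalar multiplication and under addition of two functions that each enjoy the property. Thus it suffices to establish the three properties separately for (i) the energy term, (ii) the partition-function series, and (iii) the constant $R(\theta)$. Term (i) is immediate from Theorem~\ref{thm:properties_of_energy_function_in_ContinuouSP}, which already gives that $H_\theta$ is strongly re-description invariant and that $H_\theta \circ \mathrm{PtoS}^{-1}$ is translation- and rotation-invariant and continuous. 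Term (iii) is constant in the input, so it trivially satisfies all three. The entire content therefore lies in term (ii).

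For term (ii), the key remark is that it factors through the \emph{composition} $\mathrm{Reduce}(A)$: the summand for each $j$ is $\log Z(\theta, \beta, \mathrm{Expand}(\mathrm{Reduce}(A), j))$ and depends on the input periodic unit only through $\mathrm{Reduce}(A)$, not through $X$, $L$, or the ordering or multiplicity of the entries of $A$. Hence I would first prove that the composition is a crystal invariant, i.e. $\mathrm{PtoS}(P') = \mathrm{PtoS}(P)$ implies $\mathrm{Reduce}(A') = \mathrm{Reduce}(A)$. This holds because the per-volume number density of each atomic species is intrinsic to the crystal (it is the same whichever periodic unit is chosen), so the reduced integer ratio of species counts coincides. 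Granting this, the two geometric series attached to $P$ and $P'$ agree termwise and hence have equal sums, giving strong re-description invariance of term (ii); and since term (ii) does not depend on $X$ or $L$ at all, its composite with $\mathrm{PtoS}^{-1}$ is unchanged by any translation or rotation, yielding the required invariances and, with the analogous property of term (i), the well-definedness of $\mathcal{L} \circ \mathrm{PtoS}^{-1}$.

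Continuity of term (ii) composed with $\mathrm{PtoS}^{-1}$ I would obtain by showing it is locally constant under the perturbations allowed in the continuity definition. Fix a crystal $S$ and consider a crystal $S'$ admitting a species-preserving bijection $\phi$ with all displacements bounded by $\delta$. Comparing atom counts of each species inside balls of radius $R$ and $R+\delta$ and letting $R \to \infty$ forces the species densities of $S'$ to equal those of $S$, so $S$ and $S'$ share the same composition; consequently term (ii) takes the identical value on $S$ and $S'$ for every admissible $\delta$. Continuity of $\mathcal{L} \circ \mathrm{PtoS}^{-1}$ then follows from the continuity of the energy term alone: given $\varepsilon$, pick $\delta$ from the continuity of $H_\theta \circ \mathrm{PtoS}^{-1}$, whereupon term (ii) contributes nothing to the difference and term (iii) is constant.

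I expect the main obstacle to be the composition-invariance argument underpinning term (ii)---in particular the counting/density estimate establishing that a species-preserving bijection with uniformly bounded displacement cannot alter the composition of a crystal. Once that invariance (both in its exact re-description form and its bounded-displacement form) is in hand, the additive decomposition reduces everything else to the already-proven Theorem~\ref{thm:properties_of_energy_function_in_ContinuouSP}, and no further estimates on the energy function or on the infinite series are needed.
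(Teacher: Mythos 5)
Your proposal is correct and follows essentially the same route as the paper's proof: decompose $\mathcal{L}$ into the energy term, the partition-function series, and the constant $R(\theta)$, handle the energy term via Theorem~\ref{thm:properties_of_energy_function_in_ContinuouSP}, and observe that the series depends on the input only through the composition $\mathrm{Reduce}(A)$, which is intrinsic to the crystal. If anything, your counting argument showing that a species-preserving bijection with uniformly bounded displacements cannot alter the species densities (and hence the composition) makes explicit a step the paper compresses into ``it follows immediately.''
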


If the mapping from a datum to the loss is invariant and continuous, it is reasonable to expect that the training results will also be invariant and continuous with respect to the datum. Therefore, we conclude that in ContinuouSP, both the training results regarding the training data and the probability density functions determined by the training results satisfy all requirements.

\section{Experiment}

\subsection{Experimental Setup}
\subsubsection{Crystal Structure Prediction}

We evaluate the performance of ContinuouSP on the CSP task using datasets that were utilized in previous researches such as \cite{CDVAE} and \cite{DiffCSP}. After training as usual, we generate species vectors for each crystal in the test dataset, use them as input conditions for the model, and compare the actual structures with the sampled structures.

\paragraph{Datasets}
We evaluate our methods on three datasets with varying complexity. \emph{Perov-5} \cite{Perov_5_0}\cite{Perov_5_1} consists of 18,928 crystals with perovskite structures. Each crystal has a cubic unit cell containing 5 atoms, characterized by similar structural configurations. This dataset is relatively simple. \emph{MP-20} comprises 45,231 inorganic crystals and is considered a standard dataset. Each crystal contains up to 20 atoms per unit cell. \emph{MPTS-52} is a more complex extension of MP-20, consisting of 40,476 inorganic crystals. Each crystal also contains up to 20 atoms per unit cell. Perov-5 and MP-20 datasets are split into 60-20-20 ratios for training, validation, and testing. For MPTS-52, the dataset is split into 27,380/5,000/8,096 entries based on the order of the earliest published year.

\paragraph{Baselines}
We compare our approach with several baseline methods to evaluate its effectiveness. \emph{RS}, \emph{BO}, and \emph{PSO} \cite{Predict_Optimize} represent traditional methods based on the predict-optimize paradigm. Specifically, a formation energy predictor is first constructed using MEGNet \cite{MEGNet}, followed by the discovery of structures that minimize formation energy through various optimization methods. Each of these methods employs over 5,000 iteration steps. \emph{P-cG-SchNet} \cite{DiffCSP} is an autoregressive generative model adapted for CSP by incorporating lattice basis predictions into SchNet \cite{SchNet}, which was originally designed for molecular tasks. \emph{CDVAE} \cite{CDVAE} is a crystal generative model based on diffusion and can be applied to CSP. \emph{DiffCSP} \cite{DiffCSP} is another diffusion-based model explicitly designed for CSP.

\paragraph{Metrics}
For evaluation, 20 samples are generated for each crystal in the test dataset, and each kind of metric value is computed by using \texttt{StructureMatcher(stol=0.5, angle\_tol=10, ltol=0.3)} from pymatgen library. \emph{Match Rate} is the percentage of test crystals for which at least one sampled structure matches the actual structure. A match is determined if the \texttt{get\_rms\_dist} method of \texttt{StructureMatcher} returns a value other than \texttt{None}. \emph{RMSE} is the mean of the smallest atomic RMS displacements among matched structures for each test crystal. The \texttt{float} values returned by \texttt{get\_rms\_dist} method are used for this calculation.

\paragraph{Implementation Details}
In the $\psi_\theta$ of CGCNN, the sum of the outputs of linear layers applied to $v_i$ and $v_j$, and the Hadamard product of the output of a linear layer applied to $e_{i, j, k}$, are prepared twice in the same way. One of them is passed through a sigmoid function, and the two are multiplied. After each graph convolutional layer, the vertex features are passed through a softplus activation function. The edge features are computed using Gaussian smearing with a maximum value of 20 Å and a standard deviation coefficient of 3.0. The cutoff distance $D$ for graph construction is set to 3.0 times the cube root of the reciprocal of the atomic density. For Perov-5, both the vertex feature dimension $d_\mathrm{v}$ and the edge feature dimension $d_\mathrm{e}$ are set to 32, with 3 graph convolutional layers and a 2-layer MLP with the softplus activation function. On the other hand, for MP-20 and MPTS-52, both $d_\mathrm{v}$ and $d_\mathrm{e}$ are set to 64, with 6 graph convolutional layers and a 4-layer MLP also with the softplus activation function. To prevent excessively low or high atomic densities, a penalty is added to the energy output by the neural network, which is the logarithm of the atomic density divided by the reference atomic density of 0.05 $\text{\AA}^{-3}$. The geometric distribution parameter $q$ in the loss function is set to 0.5, and sampling is performed using 1000 iterations of MALA. During the iterations, the inverse temperature $\beta$ changes exponentially from 1 to 1000, and the step size $\alpha$ changes exponentially from 0.5 to 0.0005. The lattice basis is initialized with a normal distribution such that the expected atomic density is 0.05 $\text{\AA}^{-3}$, and the atomic coordinates are initialized with a uniform distribution in fractional coordinates. At each sampling step, the lattice basis is reduced using Niggli reduction. The training optimization algorithm uses Adam with a learning rate of 0.001, $\beta_1 = 0.9$, and $\beta_2 = 0.999$. Training is performed for 5 epochs on Perov-5 and 1 epoch on MP-20 and MPTS-52, using a single NVIDIA A100 GPU.
 
\subsubsection{Displacement vs. Energy}
As a supplementary experiment to verify the validity of the model trained for the CSP task, the change in energy with respect to atomic displacement from stable structures is examined. Specifically, for crystals with stable structures included in the test dataset, Gaussian noise with a standard deviation of 0.1 Å is added to the coordinates of a single atom within the periodic unit, and the resulting structure is passed through the energy predictor. By repeating this process a sufficient number of times, the relation between the displacement and the energy can be obtained. For this experiment, five crystals are randomly selected from each of Perov-5, MP-20, and MPTS-52. From the periodic unit of each selected crystal, one atom is randomly chosen. Then, 1000 pairs of displacement magnitudes and energy values are obtained.

\subsection{Experimental Result}
\subsubsection{Crystal Structure Prediction}

\begin{table*}[t]
\centering
\caption{The results of performance evaluation on the CSP task: Metrics such as Match Rate and RMSE are reported for datasets like Perov-5, MP-20, and MPTS-50, alongside results from existing methods based on the predict-optimize paradigm and generative models. ContinuouSP generally outperforms methods based on the predict-optimize paradigm and demonstrates performance comparable to several existing generative models.}
\label{tab:result_csp}
\setlength{\tabcolsep}{10pt}
\renewcommand{\arraystretch}{1.5}
\begin{tabular}{c c c c c c c}
\toprule
 & \multicolumn{2}{c}{Perov-5} & \multicolumn{2}{c}{MP-20} & \multicolumn{2}{c}{MPTS-52} \\
 & Match rate$\uparrow$ & RMSE$\downarrow$ & Match rate$\uparrow$ & RMSE$\downarrow$ & Match rate$\uparrow$ & RMSE$\downarrow$ \\
\midrule \midrule
RS & 29.22 & 0.2924 & 8.73 & 0.2501 & 2.05 & 0.3329 \\
BO & 21.03 & 0.2830 & 8.11 & 0.2816 & 2.05 & 0.3024 \\
PSO & 20.90 & 0.0836 & 4.05 & 0.1670 & 1.06 & 0.2339 \\
\midrule \midrule
P-cG-SchNet & 97.94 & 0.3463 & 32.64 & 0.3018 & 12.96 & 0.3942 \\
CDVAE & 88.51 & 0.0464 & 66.95 & 0.1026 & 20.79 & 0.2085 \\
DiffCSP & 98.60 & \textbf{0.0128} & \textbf{77.93} & \textbf{0.0492} & \textbf{34.02} & \textbf{0.1749} \\
\midrule \midrule
ContinuouSP & \textbf{99.16} & 0.0893 & 44.21 & 0.1757 & 11.77 & 0.2647 \\
\bottomrule
\end{tabular}
\end{table*}

The experimental results are presented in Table \ref{tab:result_csp}. Overall, while the ContinuouSP does not surpass DiffCSP, generally outperforms the methods based on the predict-optimize paradigm, and exhibits comparable performance to several existing generative models. Notably, the Match Rate on the Perov-5 dataset achieves the highest value among all methods. It is unfortunate that the relative performance decreases as the complexity of the datasets increases. However, this is primarily attributed to the current inability to identify optimal hyperparameter settings for these cases, rather than any fundamental issues in the core methodology causing such tendencies. Regarding training time, it remained within 10 hours for all datasets. This is neither significantly longer nor shorter compared to prior models. Nevertheless, the drawback of EBMs --- the long time required for a single epoch — has become apparent. Addressing this issue, such as by employing parallelizable sampling methods, will be a focus for future improvements.

\subsubsection{Displacement vs. Energy}

\begin{figure*}[tb]
\centering
\includegraphics[width=0.95\textwidth]{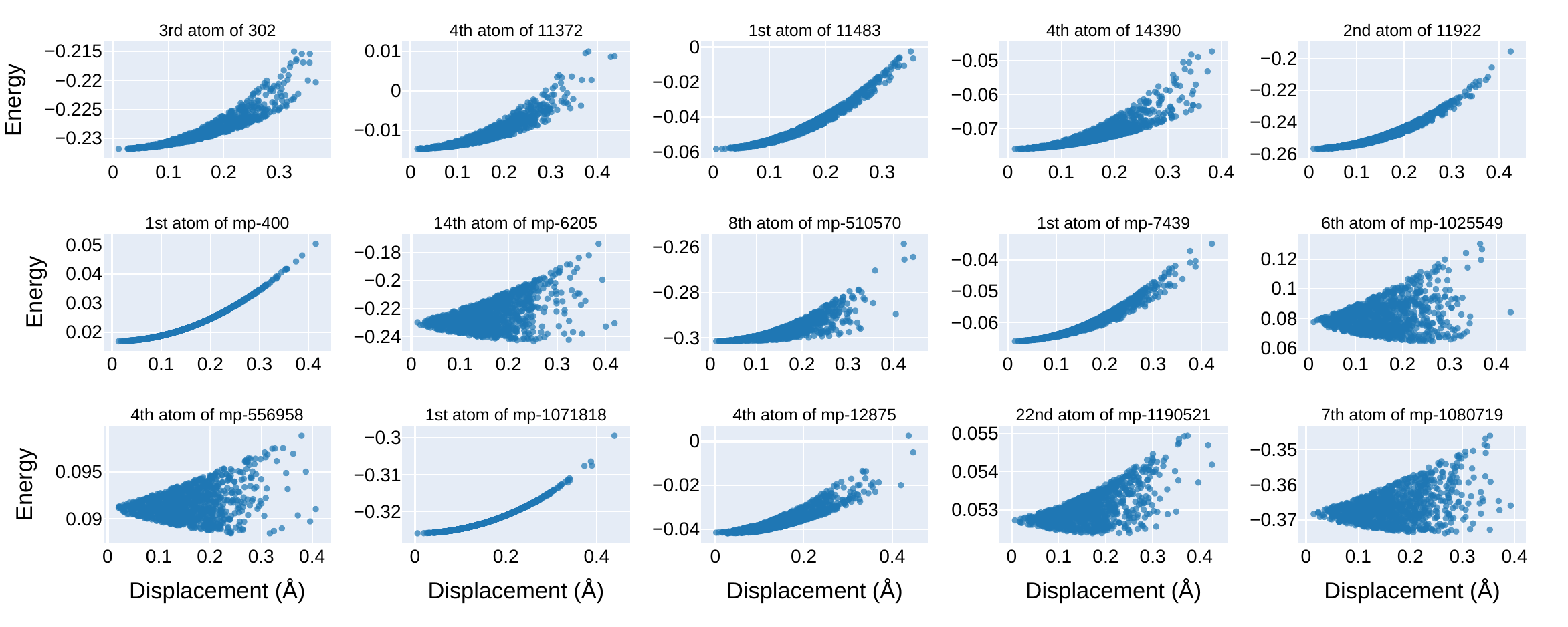} 
\caption{The results of the displacement vs. energy experiment: For crystals with stable structures, an atom is selected from the periodic unit, and Gaussian noise is added to its coordinates. The diagram shows the relationship between the displacement and the energy output by the model. From top to bottom, the results correspond to crystals picked from the Perov-5, MP-20, and MPTS-52 datasets, respectively. Note that the format of IDs representing crystals varies across datasets. The relationship between displacement and energy is divided into two types: those where energy monotonically increases with displacement and those where both increases and decreases are observed.}
\label{fig:displacement_vs_energy}
\end{figure*}

We show the experimental result in Fig. \ref{fig:displacement_vs_energy}. Across all datasets, the behavior can be categorized into two patterns: one where the energy increases monotonically with displacement from the stable position, and another where both increases and decreases in energy are observed. For the cases of monotonic energy increase, the slope also increases from horizontal as displacement grows, which aligns well with physical behavior, such as the harmonic force field approximation. Notably, these phenomena are reproduced despite the fact that no explicit constraints, such as those used in loss functions, are incorporated into the model, as is the case like \cite{CDVAE}. There are also instances where a range of energy values is observed for the same displacement magnitude, which can be interpreted as the model capturing the anisotropy of energy changes. On the other hand, cases where energy decreases with displacement indicate instances where the model fails to correctly identify the energy of the stable structure as a true local minimum. Thus, while our model can automatically emulate correct physical behavior in some cases, it also faces challenges in consistently handling all scenarios.

\section{Discussion}
\subsection{Relation with Data Augmentation}
One method to address arbitrariness in a training dataset is data augmentation. For instance, this involves applying random rotations to image data before feeding it into the model. So far, we have discussed the perspective of ensuring various invariances in crystals at the model level. However, ContinuouSP actually incorporates elements of data augmentation. Specifically, the process of doubling the number of atoms per period through $\mathrm{Reduce}$ and $\mathrm{Expand}$ serves this purpose, absorbing the arbitrariness related to the size of the period. While addressing other forms of arbitrariness through data augmentation could have been a viable approach, ensuring the invariances at the model level is preferable. This is because data augmentation struggles to handle the replacement of lattice basis that does not change the number of atoms per period. Unlike rotation or translational symmetry in periodicity, lattice basis replacement is unbounded and cannot be sampled correctly due to its symmetry and the inherent arbitrariness unrelated to the number of atoms per period. Taking these factors into consideration, we aimed to maximize the invariances at the model level by utilizing an EBM with CGCNN as its backbone.

\subsection{Comparison with Other Methods}
Here, we compare ContinuouSP with other methods from a theoretical perspective.

First, CSP using EBMs is similar to the predict-optimize paradigm. This is because the neural network in an EBM functions as an energy predictor, and the behavior of MCMC closely resembles optimization---particularly at low temperatures, where it can essentially be regarded as optimization. However, the reason we adopted EBM, despite the similarities, lies in the inherent challenges of the predict-optimize paradigm. Specifically, the paradigm's reliance on constructing an energy predictor from actual formation energy data makes it less robust to data biases. The pairs of crystal structures and formation energies in the training data must conform to the probability distribution derived from those energies. If the training data consists only of stable crystals, for instance, the energy predictor might overestimate the energy of unstable structures during optimization. Preparing a suitable dataset that satisfies this requirement is not straightforward. In contrast, EBMs or general generative models learn the probability distribution based solely on the presence or absence of data. This means that a dataset consisting only of stable crystals would suffice. Nonetheless, constructing formation energy predictors or other property predictors and utilizing them for transfer learning in EBMs or similar models could be promising for performance improvement.

The relationship with diffusion models is also crucial. Diffusion models can be seen as an evolution of EBMs, and their efficiency arises from not requiring MCMC sampling during training. Instead, they explicitly assume a probability distribution from the training data and train a score predictor to learn its gradient. We attribute the success of diffusion models in the computer vision domain partly to the validity of assuming simple distributions, such as mixtures of Gaussians, for objects in Euclidean space like images. In contrast, the topological space of crystals is more complex due to periodicity. This complexity is exemplified by the fact that even for the same crystal, the degrees of freedom can vary depending on the method of description per period. From these observations, we consider the use of EBMs instead of diffusion models in CSP to be reasonably justified. However, from the perspective of training efficiency, diffusion models have a clear advantage, as reflected in the performance evaluation results of this study. Improving the training efficiency of EBMs remains an important challenge for future work.

\section{Conclusion}
In this study, we proposed the CSP model ContinuouSP. We began by formalizing key concepts, including invariance to translation and rotation and continuity, both at the solid material level, as well as the conversion from periodic units to solid materials and the associated invariance to re-description. Using the property prediction model CGCNN and the EBM framework, we successfully fulfilled these requirements. In performance evaluation, while the model did not achieve state-of-the-art on ordinary datasets, it outperformed traditional methods and demonstrated comparable performance to several existing generative models. Moreover, a unique feature of ContinuouSP was its ability to maintain performance on extraordinary datasets, a characteristic not observed in existing models. Additionally, we discussed the relationship with data augmentation, principled comparisons with existing methods such as the predict-optimize paradigm and diffusion models, and the theoretical underpinnings of these approaches. Through this work, we reaffirm the potential of EBM in CSP tasks. Future work will focus on improving performance to further solidify its advantages.




\section{Acknowledgments}
This research was conducted using the Wisteria / BDEC-01 at The University of Tokyo.
This work was supported by MEXT ``Advanced Research Infrastructure for Materials and Nanotechnology in Japan (ARIM) '', MEXT ``Developing a Research Data Ecosystem for the Promotion of Data-Driven Science'' (2024-10) and JSPS KAKENHI (22K17899, 20K15012).
We used ChatGPT (GPT-4o) for grammar checks and writing revisions.

\bibliography{aaai25}

\newpage

\section{Appendix}
\subsection{Invariance of CGCNN}
Let us prove Theorem \ref{thm:invariance_of_CGCNN}. First, in order to prove the strong re-description invariance of $\mathrm{CGCNN}_\theta$, consider constructing a graph corresponding to a solid material.
\begin{definition}[Graph Construction for Solid Materials]
For a graph including infinite nodes corresponding to a solid material $S \in \mathcal{S}$, the neighborhood $\mathrm{N}_S(a, x)$ of a vertex $(a, x) \in S$ is defined as follows:
$$
\mathrm{N}_S(a, x) = \{(a', x') \mid (a', x') \in S, \|x' - x\| < D\}.
$$
For $(a', x') \in \mathrm{N}_S(a, x)$, an edge is formed between vertices $(a, x)$ and $(a', x')$, with a feature $e_{a, x, a', x'} \in \mathbb{R}^{d_\mathrm{e} \times 1}$ determined by the distance $\|x' - x\|$. Due to symmetry, the resulting graph is undirected.
\end{definition}

\begin{definition}[Graph Convolution for Solid Materials]
The feature $u_{a, x} \in \mathbb{R}^{d_\mathrm{v} \times 1}$ of a vertex $(a, x)$ is initialized based on its atomic species and is updated as follows:
$$
u_{a, x} \leftarrow u_{a, x} + \sum_{(a', x') \in \mathrm{N}_S(a, x)} \psi_\theta(u_{a, x}, u_{a', x'}, e_{a, x, a', x'}).
$$
\end{definition}

By using these, the strong re-description invariance of $\mathrm{CGCNN_\theta}$ can be established as follows:

\begin{lemma}[Strong Re-description Invariance of CGCNN]
\label{lem:strong_redescription_invariance_of_CGCNN}
$\mathrm{CGCNN_\theta'}$ is strongly re-description invarant.
\end{lemma}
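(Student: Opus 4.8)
The plan is to factor $\mathrm{CGCNN}_\theta'$ through the intrinsic solid-material graph of $S=\mathrm{PtoS}(P)$, using the auxiliary constructions just introduced. The point is that the solid-material graph — its nodes, edges, and edge features — depends only on $S$, not on the periodic unit chosen to describe it; so if I can show that the number $\mathrm{CGCNN}_\theta'(P)$ is computed entirely from the solid-material features of $S$, strong re-description invariance follows immediately. Concretely, writing $P=(A,(x_1,\dots,x_n),L)$ and letting $u_{a,x}$ denote the solid-material node features of $S$, I would establish two things: (i) the periodic-unit node feature $v_i$ equals $u_{a_i,x_i}$ after every convolution layer, and (ii) the final mean pooling over one period returns a value that is the same for every description of $S$.

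For (i), I would first record the covering map $(a_i,x_i+Lk)\mapsto i$ from the atoms of $S$ onto the nodes $\{1,\dots,n\}$, and show that $(j,k)\mapsto(a_j,x_j+Lk)$ is a bijection $\mathrm{N}_P(i)\to\mathrm{N}_S(a_i,x_i)$. Surjectivity is immediate from the definition of $\mathrm{PtoS}$; injectivity uses exactly the non-overlap condition built into the definition of $\mathcal{P}_n$ (if $x_j+Lk=x_{j'}+Lk'$ then $j=j'$ and $k=k'$). Because the edge feature and the $\cos^2$ weight depend only on the distance $\|x_j+Lk-x_i\|$, corresponding edges carry identical data. I would then argue by induction on the layer index: the base case holds since both features are initialized from the atomic species alone, and the inductive step matches the two convolution sums term by term under the bijection. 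The extra fact needed is that $u$ is $L$-periodic, i.e. $u_{a_j,x_j+Lk}=u_{a_j,x_j}$; this holds because translation by $Lk$ is an automorphism of the solid-material graph (it preserves species, adjacency, and edge features), and graph convolution commutes with graph automorphisms, so the equality propagates layer by layer alongside the main induction.

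The main obstacle is step (ii), the mean pooling, because two descriptions of the same crystal may have different numbers of atoms per period and genuinely different lattices, so one cannot pool over a common index set. The plan here is to pass to the full translational symmetry lattice $\Lambda$ of $S$. Since $S$ is invariant under translation by any valid description lattice, $u$ is $\Lambda$-periodic, hence constant on $\Lambda$-orbits; let $m$ be the number of such orbits. For a description with lattice $L$, the representatives $(a_1,x_1),\dots,(a_n,x_n)$ realize each $\Lambda$-orbit exactly $[\Lambda:L\mathbb{Z}^{3\times1}]$ times, and all copies share the same $u$-value, so $\frac1n\sum_{i=1}^n u_{a_i,x_i}=\frac1m\sum_{\text{orbit }o}u(o)$. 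The right-hand side is intrinsic to $S$, so the pooled feature, and therefore the value after the final MLP, is independent of the description.

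Combining (i) and (ii): for any $P,P'$ with $\mathrm{PtoS}(P)=\mathrm{PtoS}(P')=S$, both $\mathrm{CGCNN}_\theta'(P)$ and $\mathrm{CGCNN}_\theta'(P')$ equal the MLP applied to the intrinsic pooled feature $\frac1m\sum_o u(o)$, so they coincide, which is the claim. I expect the bijection and the layerwise induction to be routine; the delicate part is the orbit-counting argument for the pooling, which is where the distinction between the description lattice $L\mathbb{Z}^{3\times1}$ and the full symmetry lattice $\Lambda$ must be handled with care.
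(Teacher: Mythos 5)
Your proposal is correct and follows essentially the same route as the paper's own proof: the paper likewise defines graph construction and convolution on the solid material $S=\mathrm{PtoS}(P)$ and argues by induction on convolution layers that $u_{a_i,\,x_i+Lk}=v_i$ for all $i$ and $k$, then concludes through mean pooling and the MLP. Your part (ii) --- the orbit-counting argument showing every $\Lambda$-orbit is represented exactly $[\Lambda:L\mathbb{Z}^{3\times 1}]$ times per period, so the pooled feature is intrinsic to $S$ even when two descriptions have different lattices and atom counts --- is precisely the step the paper compresses into the single assertion that pooling ``yields the same result,'' so your write-up is strictly more complete there.
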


\begin{proof}[Proof of Lemma \ref{lem:strong_redescription_invariance_of_CGCNN}]
Consider a periodic unit $((a_1, \ldots, a_n), \allowbreak (x_1, \ldots, x_n), L) \in \mathcal{P}_n$ corresponding to a solid material $S = \mathrm{PtoS}(P)$. By mathematical induction, after graph construction and a fixed number of graph convolution steps, for any $i \in \{1, \ldots, n\}$ and $k \in \mathbb{Z}^{3 \times 1}$, we have $u_{a_i, x_i + Lk} = v_{i, k}$. The resulting node features are passed through mean pooling and an MLP, yielding the same result.
\end{proof}

Next, we show proofs of the translation and rotation invariance of $\mathrm{CGCNN}_\theta \circ \mathrm{PtoS}^{-1}$

\begin{lemma}[Translation invariance of CGCNN]
\label{lem:translation_invariance_of_CGCNN}
$\mathrm{CGCNN_\theta} \circ \mathrm{PtoS}^{-1}$ is translation invariant.
\end{lemma}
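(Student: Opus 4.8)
The plan is to reduce the claim, which is stated at the level of solid materials, to an elementary computation on periodic units. Since $\mathrm{CGCNN}_\theta$ is strongly re-description invariant by Lemma \ref{lem:strong_redescription_invariance_of_CGCNN}, the composition $\mathrm{CGCNN}_\theta \circ \mathrm{PtoS}^{-1}$ is well-defined on the image $\mathrm{PtoS}(\bigcup_{n=1}^\infty \mathcal{P}_n)$, and its value on a given crystal may be evaluated through any periodic unit representing that crystal. So I would fix a crystal $S$ in the domain, choose a periodic unit $P = (A, (x_1, \ldots, x_n), L) \in \mathcal{P}_n$ with $\mathrm{PtoS}(P) = S$, and fix an arbitrary translation $b \in \mathrm{T}(3)$.

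First I would identify the translated crystal $S_b = \{(a, x+b) \mid (a,x) \in S\}$ with the image of a translated periodic unit. Setting $P_b = (A, (x_1 + b, \ldots, x_n + b), L)$, a short check shows $P_b \in \mathcal{P}_n$ (the non-overlap condition $x_j + Lk \neq x_i$ involves only difference vectors and is therefore unaffected by the common shift $b$) and that $\mathrm{PtoS}(P_b) = S_b$, since $x_i + b + Lk = (x_i + Lk) + b$. By the well-definedness recalled above, it then suffices to prove the purely combinatorial identity $\mathrm{CGCNN}_\theta(P_b) = \mathrm{CGCNN}_\theta(P)$.

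The core observation is that every ingredient of the CGCNN computation depends on the coordinates only through difference vectors $x_j + Lk - x_i$, which are invariant under the simultaneous shift $x_i \mapsto x_i + b$. Concretely, for each node $i$ the neighborhood satisfies $\mathrm{N}_{P_b}(i) = \mathrm{N}_P(i)$ because $\|(x_j + b) + Lk - (x_i + b)\| = \|x_j + Lk - x_i\|$; the same equality shows that the edge features $e_{i,j,k}$, being determined by these distances, coincide for $P$ and $P_b$. The initial node features depend only on the species vector $A$, which is shared. Hence the two graphs are identical as attributed graphs, so each graph-convolution update produces identical node features, and the subsequent mean pooling and MLP return the same scalar.

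The argument becomes essentially transparent once framed in this way; the only point requiring care — and the step I expect to be the main obstacle to making fully rigorous — is the bookkeeping that licenses passing between the solid-material and periodic-unit levels, namely verifying that $P_b$ is a legitimate periodic unit and that $\mathrm{PtoS}(P_b) = S_b$, so that strong re-description invariance may be invoked to conclude $\mathrm{CGCNN}_\theta \circ \mathrm{PtoS}^{-1}(S_b) = \mathrm{CGCNN}_\theta(P_b) = \mathrm{CGCNN}_\theta(P) = \mathrm{CGCNN}_\theta \circ \mathrm{PtoS}^{-1}(S)$.
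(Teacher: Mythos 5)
Your proposal is correct and follows essentially the same route as the paper's own proof: both lift the translation to the periodic unit $P' = (A, (x_1+b,\ldots,x_n+b), L)$, observe that the constructed graph and initial node features are unchanged (since everything depends on coordinates only through differences $x_j + Lk - x_i$), and close the argument with the same chain of equalities through $\mathrm{PtoS}^{-1}$. Your version merely makes explicit some bookkeeping the paper leaves implicit, namely that $P' \in \mathcal{P}_n$ and that well-definedness of $\mathrm{CGCNN}_\theta \circ \mathrm{PtoS}^{-1}$ rests on the strong re-description invariance of Lemma \ref{lem:strong_redescription_invariance_of_CGCNN}.
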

\begin{lemma}[Rotation invariance of CGCNN]
\label{lem:rotation_invariance_of_CGCNN}
$\mathrm{CGCNN_\theta} \circ \mathrm{PtoS}^{-1}$ is rotation invariant.
\end{lemma}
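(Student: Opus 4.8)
The plan is to exploit the fact that every rotation $Q \in \mathrm{O}(3)$ is an isometry of $\mathbb{R}^{3 \times 1}$, so it leaves invariant the only geometric quantity the construction in $\mathrm{CGCNN}_\theta$ ever sees, namely the pairwise distances $\|x' - x\|$. Since $\mathrm{CGCNN}_\theta$ has already been shown to be strongly re-description invariant (Lemma~\ref{lem:strong_redescription_invariance_of_CGCNN}), the composite $\mathrm{CGCNN}_\theta \circ \mathrm{PtoS}^{-1}$ is a well-defined map on crystals, and it suffices to compare its value on a crystal $S = \mathrm{PtoS}(P)$ with its value on the rotated crystal $QS = \{(a, Qx) \mid (a, x) \in S\}$, which is again a crystal because $QS = \mathrm{PtoS}(A, (Qx_1, \ldots, Qx_n), QL)$ when $P = (A, (x_1, \ldots, x_n), L)$.

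First I would introduce the bijection $\rho: S \to QS$ given by $\rho(a, x) = (a, Qx)$ and verify that it is an isomorphism of the solid-material graphs attached to $S$ and $QS$. Adjacency is preserved because $\|Qx' - Qx\| = \|Q(x' - x)\| = \|x' - x\|$, so $(a', x') \in \mathrm{N}_S(a, x)$ holds if and only if $(a', Qx') \in \mathrm{N}_{QS}(a, Qx)$; the edge features coincide, $e_{a, Qx, a', Qx'} = e_{a, x, a', x'}$, since they are functions of this same unchanged distance; and the initial node features agree, as they depend only on the atomic species, which $\rho$ leaves fixed.

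Next I would propagate this agreement through the network. By induction on the number of graph-convolution steps, using that $\psi_\theta$ acts identically on identical inputs, the features of corresponding vertices stay equal across the two graphs: $u_{a, Qx} = u_{a, x}$ at every layer. For the pooling step I would invoke Lemma~\ref{lem:strong_redescription_invariance_of_CGCNN}, which identifies the solid-material vertex features with the per-period features of a periodic unit; since $\rho$ matches the $n$ per-period representatives of $S$ with those of $QS$ (as $QS$ is represented by $(A, QX, QL)$ with the same species vector $A$), the mean pooling yields the same aggregated vector, and the shared MLP then returns the same scalar. Hence $\mathrm{CGCNN}_\theta(A, QX, QL) = \mathrm{CGCNN}_\theta(A, X, L)$, which is exactly $(\mathrm{CGCNN}_\theta \circ \mathrm{PtoS}^{-1})(QS) = (\mathrm{CGCNN}_\theta \circ \mathrm{PtoS}^{-1})(S)$.

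The argument is essentially routine once distance preservation is isolated, so I do not expect a deep obstacle; the only point requiring care is the pooling step, where the solid-material graph has infinitely many vertices and \emph{mean pooling} must be interpreted as the average over a single period. The role of Lemma~\ref{lem:strong_redescription_invariance_of_CGCNN} is precisely to make this interpretation legitimate and to guarantee that $\rho$ respects the per-period indexing, so that the averages on the two sides range over matching finite multisets of features. The same outline, with the translation $x \mapsto x + b$ in place of $Q$, would prove Lemma~\ref{lem:translation_invariance_of_CGCNN}.
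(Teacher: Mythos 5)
Your proof is correct, and it rests on the same key fact as the paper's: a rotation is an isometry, so the pairwise distances that determine the graph, the edge features, and (via species only) the initial node features are all unchanged. However, your route is genuinely more indirect than the paper's. The paper proves translation invariance (Lemma~\ref{lem:translation_invariance_of_CGCNN}) and declares rotation analogous, working entirely at the level of the \emph{finite} periodic-unit graph: given $P = (A, (x_1, \ldots, x_n), L)$ with $\mathrm{PtoS}(P) = C$, the rotated crystal is represented by $P' = (A, (Qx_1, \ldots, Qx_n), QL)$, and since $\|Qx_j + QLk - Qx_i\| = \|x_j + Lk - x_i\|$ the graph and initial features built from $P'$ are \emph{identical} to those built from $P$; hence $\mathrm{CGCNN}_\theta(P') = \mathrm{CGCNN}_\theta(P)$ immediately, and the chain $(\mathrm{CGCNN}_\theta \circ \mathrm{PtoS}^{-1})(\{(a,Qx) \mid (a,x) \in C\}) = \mathrm{CGCNN}_\theta(P') = \mathrm{CGCNN}_\theta(P) = (\mathrm{CGCNN}_\theta \circ \mathrm{PtoS}^{-1})(C)$ finishes. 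You instead pass to the \emph{infinite} solid-material graph, construct the isomorphism $\rho(a,x) = (a,Qx)$, run a layer-by-layer induction there, and then invoke the feature identification from Lemma~\ref{lem:strong_redescription_invariance_of_CGCNN} to descend to the finite graph for the pooling step. This is valid, and it is conceptually faithful to the fact that rotation invariance is \emph{defined} on solid materials; but the infinite-graph machinery and the extra dependence on the inductive identification inside Lemma~\ref{lem:strong_redescription_invariance_of_CGCNN} are unnecessary here, because both crystals being compared come with canonical periodic-unit representatives and the finite computation is literally the same for both. Both arguments need Lemma~\ref{lem:strong_redescription_invariance_of_CGCNN} only to make the composite $\mathrm{CGCNN}_\theta \circ \mathrm{PtoS}^{-1}$ well-defined, which you correctly flag at the outset; the paper's proof buys brevity, while yours buys a statement that would survive even if one refused to choose a rotated representative.
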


\begin{proof}[Proof of Lemma \ref{lem:translation_invariance_of_CGCNN}]
Let a crystal be $C \in \mathrm{PtoS}(\bigcup_{n=1}^\infty \mathcal{P}_n)$ and a periodic unit $P = (A, (x_1, \ldots, x_n), L)$ such that $\mathrm{PtoS}(P) = C$. For any translation $b \in \mathrm{T}(3)$, define a new periodic unit $P' = (A, (x_1 + b, \ldots, x_n + b), L)$. The graph and initial node features constructed from $P$ and $P'$ are identical. Thus, $\mathrm{CGCNN}_\theta(P') = \mathrm{CGCNN}_\theta(P)$.
Hence:
\begin{align*}
&\mathrm{CGCNN}_\theta \circ \mathrm{PtoS}^{-1}(\{(a, x + b) \mid (a, x) \in C\}) \\
= &\mathrm{CGCNN}_\theta(P') \\
= &\mathrm{CGCNN}_\theta(P) \\
= &\mathrm{CGCNN}_\theta \circ \mathrm{PtoS}^{-1}(C).
\end{align*}
\end{proof}
\begin{proof}[Proof of Lemma \ref{lem:rotation_invariance_of_CGCNN}]
It can be proven in a similar manner to Lemma \ref{lem:translation_invariance_of_CGCNN}.
\end{proof}

Using the above, Theorem \ref{thm:invariance_of_CGCNN} follows.
\begin{proof}[Proof of Theorem \ref{thm:invariance_of_CGCNN}]
It follows from Lemmas \ref{lem:strong_redescription_invariance_of_CGCNN}, \ref{lem:translation_invariance_of_CGCNN}, \ref{lem:rotation_invariance_of_CGCNN}.
\end{proof}

\subsection{Gradient of Loss function in EBMs}
Theorem \ref{thm:gradient_of_loss_function_in_EBMs} is well-known and can be derived through the following transformations:
\begin{proof}[Proof of Theorem \ref{thm:gradient_of_loss_function_in_EBMs}]
\begin{align}
    & \nabla_\theta J(\theta) \\
    = & \frac{\beta}{n} \sum_{i=1}^n \nabla_\theta H_\theta(x_i) + \nabla_\theta \log Z(\theta, \beta) \\
    = & \frac{\beta}{n} \sum_{i=1}^n \nabla_\theta H_\theta(x_i) + \frac{\nabla_\theta Z(\theta, \beta)}{Z(\theta, \beta)} \\
    = & \frac{\beta}{n} \sum_{i=1}^n \nabla_\theta H_\theta(x_i) - \notag \\
    &\qquad\frac{\beta}{Z(\theta, \beta)} \int_{x \in \mathcal{X}} \exp(-\beta H_\theta(x)) \nabla_\theta H_\theta(x) \mathrm{d}x \\
    = & \frac{\beta}{n} \sum_{i=1}^n \nabla_\theta H_\theta(x_i) - \beta \mathbb{E}_{p_\theta(x)}\left[\nabla_\theta H_\theta(x)\right] \\
    = & \frac{\beta}{n} \sum_{i=1}^n (\nabla_\theta H_\theta(x_i) - \mathbb{E}_{p_\theta(x)}\left[\nabla_\theta H_\theta(x)\right]).
\end{align}
\end{proof}

\subsection{Properties of Energy Function in ContinuouSP}
Let us prove Theorem \ref{thm:properties_of_energy_function_in_ContinuouSP}. By $H_\theta = \mathrm{CGCNN}_\theta'$, it suffices to demonstrate the properties of $\mathrm{CGCNN}_\theta'$. Since most of the proof is almost identical to that of Theorem \ref{thm:invariance_of_CGCNN}, we focus here on proving the continuity of $\mathrm{CGCNN}_\theta' \circ \mathrm{PtoS}^{-1}$. First, consider the following lemma:
\begin{lemma}[continuity on periodic units to solid materials]
\label{lem:continuity_on_periodic_units_to_solid_materials}
Let $g: \bigcup_{n=1}^\infty \mathcal{P}_n \to \mathbb{R}^m$ be strongly permutation-invariant. Then, if the mapping $h_A$ satisfying $h_A(X, L) = g(A, X, L)$, is continuous for any $A$ with respect to each column vector of the input, then $g \circ \mathrm{PtoS}^{-1}$is continuous.
\end{lemma}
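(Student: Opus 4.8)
The plan is to unwind the definition of continuity on solid materials and reduce it, via the strong re-description invariance of $g$, to the assumed continuity of $h_A$ in the columns of its matrix argument. Fix a crystal $C = \mathrm{PtoS}(P)$ with $P = (A, X, L) \in \mathcal{P}_n$, and set $f := g \circ \mathrm{PtoS}^{-1}$, which is well-defined with $f(C) = h_A(X, L)$. Given $\varepsilon > 0$, I must produce $\delta > 0$ so that every crystal $C' \in \mathrm{Dom}(f)$ admitting a species-preserving bijection $\phi \colon C \to C'$ with all displacements below $\delta$ satisfies $\|f(C') - f(C)\| < \varepsilon$. The strategy is: from such a $\phi$, exhibit a periodic unit $(A, X', L')$ with $\mathrm{PtoS}(A, X', L') = C'$ whose base coordinates $X'$ and lattice $L'$ are close to $X$ and $L$; then $f(C') = h_A(X', L')$, and continuity of $h_A$ closes the argument.

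First I would record two structural facts about $C$. The periodic-unit condition forces $L$ to be nonsingular, since a nonzero $k$ with $Lk = 0$ would give $x_i + Lk = x_i$, contradicting distinctness; hence $C$ is a genuine rank-$3$ arrangement possessing a strictly positive minimum interatomic distance $d_{\min} > 0$ (the infimum of $\|x_i - x_j + Lm\|$ over distinct atom pairs, attained on a finite set and positive by distinctness, since the norm grows without bound as $\|m\|$ increases). Restricting to $\delta < d_{\min}/3$, the bijection $\phi$ then matches each atom of $C$ to a unique nearby atom of $C'$ and approximately preserves local geometry, because images of atoms originally at least $d_{\min}$ apart remain at least $d_{\min} - 2\delta > d_{\min}/3$ apart.

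The core step is to reconstruct a periodic description of $C'$ close to $P$. I would take $X'$ to be the images under $\phi$ of the base atoms $x_1, \dots, x_n$ (so the reconstructed species vector is exactly $A$, as $\phi$ preserves species), and identify a lattice $L'$ for $C'$ whose columns $\ell'_1, \ell'_2, \ell'_3$ are pinned down to within $O(\delta)$ by comparing the images of each $x_i$ with those of its lattice translates $x_i + \ell_m$. One then checks that $(A, X', L')$ is a valid periodic unit generating exactly $C'$, with $\|x'_i - x_i\|$ and $\|\ell'_m - \ell_m\|$ bounded by a quantity vanishing as $\delta \to 0$.

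The hard part will be precisely this reconstruction, because $\phi$ need not respect periodicity: it may shuffle atoms between distinct periods in a non-translation-invariant way, so it does not by itself furnish a periodic unit. The delicate point is to show that the mere existence of a small-displacement $\phi$ nonetheless forces $C'$ to inherit, approximately, the translational symmetry of $C$ — that the symmetry lattice $\Lambda'$ of $C'$ (the translations fixing $C'$ as a labeled set) is again rank $3$ with a basis $\delta$-close to that of $L$, and that a fundamental domain of $C'$ contains exactly $n$ atoms, $\delta$-close to $X$ and of matching species. I would establish this stability of periodicity by characterizing $\Lambda'$ intrinsically and bounding its generators using $\phi$ together with the separation $d_{\min}$, thereby ruling out any change in the number of atoms per period or in the lattice rank under a sufficiently small perturbation. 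Once the close periodic unit $(A, X', L')$ is secured, I choose $\delta$ small enough that continuity of $h_A$ in each column gives $\|h_A(X', L') - h_A(X, L)\| < \varepsilon$; combined with $f(C) = h_A(X, L)$ and $f(C') = h_A(X', L')$, this establishes continuity of $f$ at $C$, and since $C$ was arbitrary, $g \circ \mathrm{PtoS}^{-1}$ is continuous.
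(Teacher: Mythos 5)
Your reduction is the same as the paper's: from the species-preserving bijection $\phi$ with displacements below $\delta$, produce a periodic unit $(A, X', L')$ with $\mathrm{PtoS}(A,X',L') = C'$ whose columns are close to those of $(X,L)$, then finish with strong re-description invariance of $g$ and continuity of $h_A$. The paper does exactly this, extracting $\|x_i'-x_i\|<\delta$ and $\|l_m'-l_m\|<\delta$ from an asserted index-matched description $P'$ of $C'$. The problem is the step you yourself flag as ``the hard part'': it is not merely unfinished, the stability statement you propose to prove there is false. You claim that for small $\delta$ the symmetry lattice $\Lambda'$ of $C'$ is rank $3$ with a basis $\delta$-close to that of $L$, and that a fundamental domain of $C'$ contains exactly $n$ atoms. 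Counterexample: take $C=\mathrm{PtoS}((\mathrm{H}),(0),I)$, the cubic hydrogen crystal, and let $C'$ be obtained by displacing every atom at $(k_1,k_2,k_3)$ with $k_1\equiv 0 \pmod p$ by $\epsilon e_1$, $0<\epsilon<\delta$. Then $C'$ is a genuine crystal (describable with $p$ atoms and lattice $\mathrm{diag}(p,1,1)$), it admits a species-preserving bijection from $C$ with all displacements $\le \epsilon$, yet its full translation lattice is $p\mathbb{Z}\times\mathbb{Z}\times\mathbb{Z}$ --- no basis of which is close to the columns of $I$ --- and every periodic unit describing $C'$ has an atom count divisible by $p$. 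So for $p \nmid n$ there is no $n$-atom description of $C'$ at all, close or otherwise; your separation radius $d_{\min}$ cannot rule this out, because the perturbation is everywhere small and only breaks symmetry rather than moving atoms between cells.

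This is exactly the difficulty the paper is gesturing at (tersely, and without proof) when it chooses the description $P$ of $C$ so that ``$n$ has sufficiently many divisors'': since any description of $C'$ must have an atom count that is a multiple of the atom count of its primitive cell, one must retain freedom in the choice of $n$ to absorb symmetry-breaking perturbations, whereas your argument fixes $P$ --- hence $n$, hence the single modulus of continuity of $h_A$ at $(X,L)$ --- before $C'$ is revealed, after which the needed reconstruction can simply fail to exist. To be fair, the paper's own proof shares the essential gap: it asserts the existence of the matched description $P'$ without justification, and even the divisor hedge does not close the quantifier order, since the modulus $p$ above can be chosen larger than any fixed $n$. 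But in your write-up the defect is sharper: the intermediate lemma you would need (``stability of periodicity'' in the strong form of preserving $\Lambda'$ and the per-cell atom count) is outright false, so the announced strategy for the hard part cannot be carried out.
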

\begin{proof}[Proof of Lemma \ref{lem:continuity_on_periodic_units_to_solid_materials}]
Let $C \in \mathrm{PtoS}(\bigcup_{n=1}^\infty \mathcal{P}_n)$ and $\varepsilon > 0$. Consider $P = (A, X, L) = (A, (x_1, \ldots, x_n), (l_1, l_2, l_3))$ such that $\mathrm{PtoS}(P) = C$ and $n$ has sufficiently many divisors. By the continuity of $h_A$, there exists $\delta > 0$ such that for any periodic unit $P' = (A, X', L')$, if the norms of each column of $X' - X$ and $L' - L$ are less than $\delta$, then $\|h_A(X', L') - h_A(X, L)\| < \varepsilon$. For this $\delta$, consider any $C' \in \mathrm{PtoS}(\bigcup_{n=1}^\infty \mathcal{P}_n)$. There exists a bijection $\phi: C \to C'$ such that for any atom $(a, x) \in C$, if $\phi(a, x) = (a', x')$, then $a' = a$ and $\|x' - x\| < \delta$. Thus, there exists $P' = (A, X', L') = (A, (x_1', \ldots, x_n'), (l_1', l_2', l_3'))$ such that $\mathrm{PtoS}(P') = C'$ and for any $i \in \{1, \ldots, n\}$ and $k, k' \in \mathbb{Z}^{3 \times 1}$, $\|x_i' + L'k' - x_i - Lk\| < \delta$. By considering $k = k' = 0$, $\|x_i' - x_i\| < \delta$ stands. Also, by considering $k = k' = (\pm 1, 0, 0)^\mathrm{T}$, $\|l_1' - l_1\| < \delta$ stands. Similarly, $\|l_2' - l_2\| < \delta$ and $\|l_3' - l_3\| < \delta$. Hence, $\|h_A(X', L') - h_A(X, L)\| = \|g(P') - g(P)\| = \|g \circ \mathrm{PtoS}^{-1}(C') - g \circ \mathrm{PtoS}^{-1}(C)\| < \varepsilon$.
\end{proof}

By using this lemma, we can prove the continuity of $\mathrm{CGCNN}_\theta' \circ \mathrm{PtoS}^{-1}$ as follows:
\begin{lemma}[Continuity of CGCNN']
\label{lem:continuity_of_CGCNN'}
$\mathrm{CGCNN}_\theta' \circ \mathrm{PtoS}^{-1}$ is continuous.
\end{lemma}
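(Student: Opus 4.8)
The plan is to deduce the statement from Lemma~\ref{lem:continuity_on_periodic_units_to_solid_materials}. That lemma reduces the continuity of $\mathrm{CGCNN}_\theta' \circ \mathrm{PtoS}^{-1}$ to two ingredients: the strong re-description invariance of $\mathrm{CGCNN}_\theta'$, which is already established in Lemma~\ref{lem:strong_redescription_invariance_of_CGCNN}, and the ordinary continuity, for each fixed species vector $A$, of the map $h_A(X, L) = \mathrm{CGCNN}_\theta'(A, X, L)$ in each column of $X$ and $L$. So the entire remaining task is to verify this second ingredient.

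First I would isolate the only place where continuity could break, namely graph construction. As $(X, L)$ varies, the membership of a pair $(j, k)$ in $\mathrm{N}_P(i)$ flips exactly when $\|x_j + Lk - x_i\|$ crosses the cutoff $D$, so the plain CGCNN update is discontinuous there. The key observation is that in $\mathrm{CGCNN}_\theta'$ the summand carries the factor $\cos^2(\pi \|x_j + Lk - x_i\| / (2D))$, which vanishes precisely at distance $D$. Hence I would rewrite the update over \emph{all} pairs $(j, k) \in \{1, \ldots, n\} \times \mathbb{Z}^{3 \times 1}$ using a single weight
\[
w(r) = \begin{cases} \cos^2\!\left(\dfrac{\pi r}{2D}\right), & 0 \le r \le D, \\ 0, & r > D, \end{cases}
\]
which is continuous on $[0, \infty)$ because both branches agree (equal $0$) at $r = D$. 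With this reformulation the appearance or disappearance of an edge happens with zero weight, so the combinatorial flip no longer produces a jump.

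Next I would argue continuity of one convolution step as a function of $(X, L)$. For a fixed nondegenerate $L$, only finitely many lattice images $x_j + Lk$ lie within distance $D$ of a given $x_i$, and this remains true, with a uniform bound, throughout a small neighborhood of $(X, L)$; all other pairs keep weight $0$ on that neighborhood. Thus locally the a priori infinite sum is a finite sum of terms of the form $w(\|x_j + Lk - x_i\|)\, \psi_\theta(v_i, v_j, e_{i,j,k})$, each continuous in $(X, L)$ because the distance, the edge feature $e_{i,j,k}$ (a continuous function of the distance), the weight $w$, and $\psi_\theta$ are all continuous. A finite sum of continuous functions is continuous, so one convolution step is continuous. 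Composing these steps with the continuous mean pooling and MLP, and initializing node features from the fixed $A$, an induction on the number of layers gives continuity of $h_A$. Applying Lemma~\ref{lem:continuity_on_periodic_units_to_solid_materials} then yields Lemma~\ref{lem:continuity_of_CGCNN'}.

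I expect the main obstacle to be the local-finiteness and uniformity argument for the lattice sum: one must choose the neighborhood of $(X, L)$ small enough that $L$ stays nondegenerate, that the finitely many images which could enter the ball of radius $D$ are controlled, and that every far image stays strictly beyond $D$ and hence at weight $0$. Verifying continuity exactly at the crossing points --- where an image sits at distance $D$ --- is where the $\cos^2$ cutoff does the real work, and making this rigorous for the infinite periodic sum is the delicate step; the rest is a routine composition of continuous maps.
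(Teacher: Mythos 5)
Your proposal is correct and follows essentially the same route as the paper's own proof: both rewrite the convolution as a sum over all pairs $(j,k)$ weighted by the continuous cutoff function (the paper's $\omega_k$, your $w$), observe that edges appear and disappear with zero weight, and then invoke Lemma~\ref{lem:continuity_on_periodic_units_to_solid_materials} after composing with the continuous mean pooling and MLP. If anything, your argument is slightly more complete, since you explicitly justify the local finiteness of the periodic sum near a given $(X,L)$, a point the paper's proof leaves implicit.
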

\begin{proof}[Proof of Lemma \ref{lem:continuity_of_CGCNN'}]
The mapping $\omega_k$ defined as 
\begin{multline}
\omega_k(x_i, x_j, L) = \\
\left\{\begin{array}{ll}
\cos^2\left(\frac{\pi\|x_j + Lk - x_i\|}{2D}\right) & (\|x_j + Lk - x_i\| < D) \\
0 & (\|x_j + Lk - x_i\| \geq D)
\end{array}\right.
\end{multline}
is continuous with respect to each column vector of the input.Using this, the update rule for $\mathrm{CGCNN}_\theta'$ is expressed as:
\begin{equation}
v_i \leftarrow v_i + \sum_{\substack{j \in \{1, \ldots, n\} \\ k \in \mathbb{Z}^{3 \times 1}}} \omega_k(x_i, x_j, L)\psi_\theta(v_i, v_j, e_{i, j, k}).
\end{equation}
This update rule relies solely on mappings that are continuous with respect to each column vector of $X$ and $L$. Additionally, since mean pooling and MLP are continuous mappings, $\mathrm{CGCNN}_\theta'$ satisfies the conditions of Lemma \ref{lem:continuity_on_periodic_units_to_solid_materials}.
\end{proof}

Thus, the proof of Theorem \ref{thm:properties_of_energy_function_in_ContinuouSP} is complete.
\begin{proof}[Proof of Theorem \ref{thm:properties_of_energy_function_in_ContinuouSP}]
The content corresponding to Theorem \ref{thm:invariance_of_CGCNN} and Lemma \ref{lem:continuity_of_CGCNN'} implies properties concerning $\mathrm{CGCNN_\theta'}$. These properties also hold for $H_\theta = \mathrm{CGCNN_\theta'}$.
\end{proof}

\subsection{Properties of Probability Density Function in ContinuouSP}
\begin{proof}[Proof of Theorem \ref{thm:properties_of_probability_density_function_in_ContinuouSP}]
Considering the substitution that simultaneously changes the order of $X'$ in Eq. \ref{eq:partition_function_in_ContinuouSP} along with the order of $A$, the value of $Z$, the denominator in Eq. \ref{eq:probability_distribution_function_in_ContinuouSP}, remains unchanged due to the strong re-description invariance of $H_\theta$. Therefore, revisiting the strong re-description invariance of $H_\theta$ in the numerator of Eq. \ref{eq:probability_distribution_function_in_ContinuouSP}, we can conclude that $p_\theta$ remains unchanged even when the unit cell is altered without changing the crystal or the number of atoms per period. Furthermore, due to the invariance and the continuity of $H_\theta \circ \mathrm{PtoS}^{-1}$, it follows immediately that $p_\theta \circ \mathrm{PtoS}|_{\mathcal{P}_n}^{-1}$ also exhibits the same properties.
\end{proof}

\subsection{Gradient of Loss Function in ContinuouSP}
\begin{proof}[Proof of Theorem \ref{thm:gradient_of_loss_function_in_ContinuouSP}]
\begin{align}
    & \nabla_\theta J(\theta) \\
    =&\frac{1}{m} \sum_{i=1}^m \Bigg(\beta \nabla_\theta H_\theta(A_i, X_i, L_i) + \notag \\ 
    & \qquad\qquad\quad \sum_{j=1}^\infty (1-q)^{j-1} q \nabla_\theta \log Z(\theta, \beta, A_i')\Bigg) \\
    = &\frac{1}{m} \sum_{i=1}^m \Bigg(\beta \nabla_\theta H_\theta(A_i, X_i, L_i) + \notag \\
    & \qquad\qquad\qquad\sum_{j=1}^\infty (1-q)^{j-1} q \frac{\nabla_\theta Z(\theta, \beta, A_i')}{Z(\theta, \beta, A_i')} \Bigg) \\
    = &\frac{1}{m} \sum_{i=1}^m \Bigg(\beta \nabla_\theta H_\theta(A_i, X_i, L_i) + \sum_{j=1}^\infty \frac{\beta(1-q)^{j-1} q}{Z(\theta, \beta, A_i')} \notag \\
    &\!\!\! \int_{\substack{X \in \mathbb{R}^{3 \times n} \\ L \in \mathbb{R}^{3 \times 3}}} \exp(-\beta H_\theta(A_i', X, L)) \nabla_\theta H_\theta(A_i', X, L) \mathrm{d}X \mathrm{d}L \Bigg) \\
    = & \frac{1}{m} \sum_{i=1}^m \Bigg(\beta \nabla_\theta H_\theta(A_i, X_i, L_i) - \notag \\
    & \qquad\qquad\qquad \beta\mathbb{E}_{\substack{j \sim \mathrm{Geom}(q) \\ p_\theta(X, L \mid A_i')}}\left[ \nabla_\theta H_\theta(A_i', X, L) \right] \Bigg) \\
    = & \frac{\beta}{m} \sum_{i=1}^m \Bigg( \nabla_\theta H_\theta(A_i, X_i, L_i) - \notag \\ 
    & \qquad\qquad\qquad \mathbb{E}_{\substack{j \sim \mathrm{Geom}(q) \\ p_\theta(X, L \mid A_i')}}\left[ \nabla_\theta H_\theta(A_i', X, L) \right] \Bigg).
\end{align}
\end{proof}

\subsection{Properties of Loss Function in ContinuouSP}
\begin{proof}[Proof of Theorem \ref{thm:properties_of_loss_function_in_ContinuouSP}]
Since $\mathrm{Reduce}(A)$ is the composition which is intrinsic to the crystal, the summation doesn't depend on re-description. Thus, also with the strong re-description invariance of $H_\theta$, the strong re-description invariance of $\mathcal{L}$ stands. Furthermore, due to the invariance and the continuity of $H_\theta \circ \mathrm{PtoS}^{-1}$, it follows immediately that $\mathcal{L} \circ \mathrm{PtoS}^{-1}$ also exhibits the same properties.
\end{proof}

\end{document}